\documentclass[11pt]{article}

\usepackage[margin=1in]{geometry}
\usepackage[T1]{fontenc}
\usepackage[utf8]{inputenc}

\usepackage{mathtools,amssymb,amsthm}
\usepackage{enumitem}
\usepackage{microtype}
\usepackage{natbib}
\usepackage{algorithm}
\usepackage{algpseudocode}

\usepackage[
  colorlinks=true,
  linkcolor=blue,
  citecolor=blue,
  urlcolor=blue
]{hyperref}

\allowdisplaybreaks[3]

\theoremstyle{plain}
\newtheorem{theorem}{Theorem}[section]
\newtheorem{lemma}[theorem]{Lemma}
\newtheorem{proposition}[theorem]{Proposition}
\newtheorem{corollary}[theorem]{Corollary}

\theoremstyle{remark}
\newtheorem{remark}[theorem]{Remark}

\numberwithin{equation}{section}

\setlist[itemize]{
  leftmargin=*,
  topsep=4pt,
  itemsep=2pt
}
\setlist[enumerate]{
  leftmargin=*,
  topsep=4pt,
  itemsep=2pt
}

\title{R\'enyi Tracking Bounds for Langevin Dynamics with Moving Targets}
\author{Yuchen Xin\textsuperscript{*},
Jingxin Zhan\textsuperscript{$\dagger$},
Zhihua Zhang\textsuperscript{$\ddagger$}}
\date{}

\begin{document}

\maketitle

\begingroup
\renewcommand{\thefootnote}{\fnsymbol{footnote}}
\footnotetext[1]{School of Mathematical Sciences, Peking University; email: \texttt{2301110087@pku.edu.cn}}
\footnotetext[2]{School of Mathematical Sciences, Peking University; email: \texttt{bjdxzjx@pku.edu.cn}}
\footnotetext[3]{School of Mathematical Sciences, Peking University; email: \texttt{zhzhang@math.pku.edu.cn}}
\endgroup

\begin{abstract}
We study Langevin diffusion and Langevin Monte Carlo (LMC) when the target distribution changes over time. Under a log-Sobolev inequality (LSI), we derive non-asymptotic R\'enyi-divergence guarantees for tracking the current target. The framework covers continuous-time Langevin diffusion and its discretizations. We then apply the results to nonsmooth sampling based on successive Moreau envelopes. For this scheme, we give explicit choices of the smoothing parameters and step sizes, together with corresponding complexity bounds. To our knowledge, these are the first non-asymptotic R\'enyi-divergence tracking bounds for Langevin dynamics with discrete target updates.
\end{abstract}

\tableofcontents

\section{Introduction}

We study the problem of sampling from a target distribution $\pi\propto {\rm exp}(-V)$ on $\mathbb{R}^d$. Sampling from such distributions is a fundamental task in many areas such as statistics, machine learning, and computational physics \citep{robert2004monte, gelman1995bayesian}. In many applications, the normalizing constant of $\pi$ is intractable, making direct sampling impossible. 

To address these difficulties, a prominent line of work focuses on constructing Markov chain Monte Carlo (MCMC) algorithms that use gradient information of the potential $V$ \citep{brooks2011handbook}.  Among these, Langevin-based methods have emerged as a powerful approach, as they combine local gradient-driven exploration with stochastic perturbations to efficiently approximate the target distribution. The Langevin diffusion is defined by the stochastic differential equation
\[
dX_t = -\nabla V(X_t)\,dt + \sqrt{2}\,dB_t,
\]
where $(B_t)_{t \ge 0}$ is a standard Brownian motion. Under suitable conditions on the potential $V$, this diffusion admits $\pi$ as its invariant distribution, and thus provides a continuous-time mechanism for sampling from $\pi$. In practice, one cannot simulate this diffusion exactly and instead uses a time discretization, leading to the Langevin Monte Carlo (LMC) algorithm:
\[
X_{k+1} = X_k - h \nabla V(X_k) + \sqrt{2h}\,\xi_k,
\]
where $(\xi_k)_k$ are i.i.d.\ standard Gaussian random variables and $h>0$ is the step size.

Whilst classical LMC targets a fixed distribution $\pi$, a growing body of recent work considers sampling schemes with moving targets. Such approaches naturally arise in several contexts, including simulated annealing \citep{song2019generative}, and Moreau–Yosida regularization \citep{durmus2016efficientbayesiancomputationproximal, habring2025diffusionabsolutezerolangevin}, where one introduces a family of auxiliary distributions $(\pi_t)_{t\ge 0}$ that gradually approach the target $\pi$. 

Motivated by these developments, one can consider a time-inhomogeneous Langevin diffusion
\begin{align}
dX_t = -\nabla V_t(X_t)\,dt + \sqrt{2}\,dB_t,
\label{eq:smooth moving target diffusion}
\end{align}
whose associated Gibbs measure at time $t$ is given by $\pi_t \propto \exp(-V_t)$.

In this work, we first study such time-inhomogeneous Langevin dynamics in continuous time, where the potential may evolve smoothly with time. This allows us to identify the basic mechanism behind the analysis: the Rényi divergence contracts under the log-Sobolev inequality, while the time variation of the target distribution creates an additional error term. We then focus on a tractable and practically relevant subclass of such dynamics, where the potential evolves in a piecewise constant manner. Specifically, let $0 = t_0 < t_1 < \dots < t_M = T$ be a partition of the time interval, and let $\lambda_1,\dots,\lambda_M$ be given parameters. We consider the Langevin diffusion with moving targets
\begin{align}
dX_t = -\nabla V_{\lambda_i}(X_t)\,dt + \sqrt{2}\,dB_t,\qquad t\in[t_{i-1},t_i).
\label{Langevin diffusion with moving targets}
\end{align}

This formulation can be viewed as a time discretization of a general inhomogeneous Langevin diffusion, and naturally captures annealing-type or regularization-based schemes where the potential is updated at discrete stages.

We then consider the following discretization. For $1\le i\le M$, we apply the unadjusted Langevin algorithm (ULA) with step size $h_i = \frac{t_i - t_{i-1}}{N_i}$:
\begin{equation}
\begin{aligned}
X_{t_{i-1}+kh_i} 
&= X_{t_{i-1}+(k-1)h_i}
- h_i\nabla V_{\lambda_i}(X_{t_{i-1}+(k-1)h_i}) \\
&\quad + \sqrt{2}\,(B_{t_{i-1}+kh_i}-B_{t_{i-1}+(k-1)h_i}),
\quad k=1,\cdots,N_i.
\end{aligned}
\label{ULA with moving targets}
\end{equation}

Despite the growing interest in time-inhomogeneous Langevin dynamics, their theoretical understanding remains limited and fragmented. Many techniques from the homogeneous setting do not directly extend to this time-inhomogeneous setting. Existing analyses are often tailored to specific paths and rely on problem-dependent arguments. Moreover, most guarantees are based on the Kullback–Leibler divergence. We defer a more detailed discussion to related work.

In this work, we study this problem using R\'enyi divergence. We first analyze the general time-inhomogeneous Langevin diffusion \eqref{eq:smooth moving target diffusion}, and then develop iterative bounds for the piecewise-constant diffusion in \eqref{Langevin diffusion with moving targets} and its LMC discretization in \eqref{ULA with moving targets}.

\paragraph{Contributions.} Our contributions are summarized as follows.
\begin{itemize}
    \item We analyze the convergence of Langevin diffusion with moving targets and its discretization under a log-Sobolev inequality (LSI). For the general time-inhomogeneous diffusion \eqref{eq:smooth moving target diffusion}, Theorem~\ref{thm:smooth-moving-targets} gives a differential inequality for the R\'enyi divergence between the law of the process and the instantaneous target distribution. For the piecewise-constant diffusion in \eqref{Langevin diffusion with moving targets} and its LMC discretization in \eqref{ULA with moving targets}, Theorem~\ref{thm:Renyi convergence for ULA with moving targets} and Theorem~\ref{thm:Renyi convergence for Langevin diffusion with moving targets} give iterative non-asymptotic upper bounds.
    
    \item We analyze LMC using successive Moreau envelopes. We propose a sampling scheme in \autoref{alg:ULA using successive Moreau Envelopes} and establish a non-asymptotic complexity bound for the algorithm in Theorem~\ref{thm:Convergence of ULA using successive Moreau Envelopes}.
\end{itemize}

\paragraph{Related work}

The convergence of Langevin diffusion and its discretizations, such as Langevin Monte Carlo (LMC), has been extensively studied under various assumptions on the potential (e.g., \citet{roberts1996exponential}; \citet{durmus2016nonasymptoticconvergenceanalysisunadjusted}; \citet{dalalyan2016theoreticalguaranteesapproximatesampling}). These works primarily focus on convergence toward a fixed target distribution, and are typically analyzed in Kullback–Leibler (KL) divergence, total variation, or Wasserstein distance. More recently, convergence guarantees in R\'enyi divergence have also been established (e.g., \citet{vempala2022rapidconvergenceunadjustedlangevin}; \citet{chewi2024analysislangevinmontecarlo}).

Recently, the theoretical analysis of time-inhomogeneous Langevin diffusion and LMC with moving targets has received attention. Many existing works are tailored to specific problem settings or annealing paths. For instance, \citet{corderoencinar2025nonasymptoticanalysisdiffusionannealed} analyzed diffusion path, closely related to generative modelling techniques
based on annealed Langevin dynamic in \citep{song2019generative}, and established convergence by bounding the backward KL divergence via Wasserstein metric derivatives along the path of distributions. Similarly, \citet{guo2025provablebenefitannealedlangevin} considered a class of interpolating distributions where both the energy function and an additional quadratic regularization term vary along the path, and analyzed convergence by controlling the backward KL divergence using  the Girsanov theorem and optimal transport techniques. In a related direction, \citet{chehab2025provableconvergencelimitationsgeometric} studied geometric tempering paths and analyzed the evolution of the forward KL divergence along the dynamics. More recently, \citet{habring2026forwardklconvergencetimeinhomogeneouslangevin} established non-asymptotic forward-KL convergence for continuously varying Langevin diffusions and their Euler--Maruyama discretizations under uniform regularity, dissipativity, and log-Sobolev assumptions. They also derived worst-case complexity bounds and qualitative guidelines for path and step-size design. Our results are complementary. We study tracking in R\'enyi divergence and, in the piecewise-constant setting, allow the smoothness and log-Sobolev constants to vary from stage to stage, while measuring each target switch through a R\'enyi divergence between consecutive targets.

As an important instance of Langevin Monte Carlo with moving targets, we consider sampling based on successive Moreau envelopes, which has recently attracted increasing attention for handling nonsmooth potentials. A common approach in this setting is to replace the original potential with its Moreau envelope, yielding a differentiable surrogate that enables gradient-based Langevin dynamics. This idea underlies the Moreau–Yosida unadjusted Langevin algorithm (MYULA), where a fixed Moreau parameter is used to define a single surrogate distribution \citep{durmus2016efficientbayesiancomputationproximal,pereyra2015proximalmarkovchainmonte}. While this approach improves regularity, it introduces a non-vanishing bias unless the parameter is taken to zero.

This limitation naturally motivates considering a sequence of target distributions corresponding to varying Moreau parameters, which can be viewed as a particular case of Langevin Monte Carlo with moving targets. Such an approach has been explored in \citet{habring2025diffusionabsolutezerolangevin}, where the authors studied a sequence of Moreau envelopes that progressively approximate the target distribution. They established ergodicity of the associated Langevin dynamics for each fixed parameter and gave stability properties of the corresponding stationary distributions, including Lipschitz continuity in total variation with respect to the Moreau parameter. However, they did not provide a non-asymptotic error bound for the complete algorithm when the Moreau parameter changes over time. In their later work, \citet{habring2026forwardklconvergencetimeinhomogeneouslangevin} considered Langevin dynamics with time-varying Moreau parameters as an example. Nevertheless, their Moreau analysis assumes a differentiable potential and does not provide explicit stagewise choices of the Moreau parameters and step sizes for the discrete scheme considered here. Our framework provides a way to analyze Langevin dynamics with time-varying Moreau parameters, and enables the derivation of explicit convergence guarantees.

\section{Preliminaries}

\paragraph{R\'enyi divergence.}
Let $q \in (1,\infty)$ and let $\mu,\nu$ be probability measures.
The R\'enyi divergence of order $q$ is defined as
\[
\mathcal{R}_q(\mu \| \nu)
:= \frac{1}{q-1} \log \int \left(\frac{d\mu}{d\nu}\right)^q d\nu,
\]
if $\mu \ll \nu$ and $\mathcal{R}_q(\mu\|\nu)=+\infty$ otherwise.
The cases $q=1$ and $q=\infty$ are defined by limits:
$\mathcal{R}_1(\mu\|\nu)=\mathrm{KL}(\mu\|\nu)$ and
$\mathcal{R}_\infty(\mu\|\nu)=\log\|\frac{d\mu}{d\nu}\|_{L^\infty(\nu)}$.
Although $\mathcal{R}_q$ is not a metric, it satisfies $\mathcal{R}_q(\mu\|\nu)\ge 0$ with equality if and only if $\mu=\nu$.

We recall the following standard properties of R\'enyi divergence (see, e.g., \citet[Proposition~A.2]{altschuler2024shiftedcompositioniiilocal}).

\begin{lemma}
\label{lemma:renyi-basic}
Let $q \in [1,\infty]$ and let $\mu,\nu,\pi$ be probability measures.
\begin{enumerate}
\item \textbf{(Monotonicity)}
If $1 \le q \le q'$, then
\[
\mathcal{R}_q(\mu\|\nu) \le \mathcal{R}_{q'}(\mu\|\nu).
\]

\item \textbf{(Weak triangle inequality)}
For any $\lambda \in (0,1)$,
\[
\mathcal{R}_q(\mu \| \pi)
\le
\frac{q-\lambda}{q-1} \mathcal{R}_{\frac{q}{\lambda}}(\mu \| \nu)
+
\mathcal{R}_{\frac{q-\lambda}{1-\lambda}}(\nu \| \pi).
\]
\end{enumerate}
\end{lemma}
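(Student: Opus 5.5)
For the monotonicity statement, fix $\mu\ll\nu$ (otherwise both sides are $+\infty$ for $q>1$; for $q=1$ the KL is also infinite) and write $f=\frac{d\mu}{d\nu}$. The plan is to show that $q\mapsto (q-1)\mathcal{R}_q(\mu\|\nu)=\log\int f^q\,d\nu$ is convex in $q$ and vanishes at $q=1$. Convexity follows from H\"older's inequality: for $q_\theta=\theta q_0+(1-\theta)q_1$,
\[
\int f^{q_\theta}\,d\nu\le\Bigl(\int f^{q_0}\,d\nu\Bigr)^{\theta}\Bigl(\int f^{q_1}\,d\nu\Bigr)^{1-\theta}.
\]
A convex function $\phi$ with $\phi(1)=0$ has nondecreasing slopes $\phi(q)/(q-1)$ for $q>1$, which is exactly the claimed monotonicity on $(1,\infty)$. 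The endpoints are handled by limits. For $q=1$, the limit as $q\downarrow1$ of a nondecreasing family is its infimum, which gives KL. For $q=\infty$, use $\bigl(\int f^q\,d\nu\bigr)^{1/(q-1)}\le \|f\|_\infty^{q/(q-1)}\cdot$(mass factor), or argue directly from $\int f^q\,d\nu\le\|f\|_\infty^{q-1}\int f\,d\nu=\|f\|_\infty^{q-1}$.

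For the weak triangle inequality, again assume all the divergences on the right are finite, so that $\mu\ll\nu\ll\pi$. Then $\frac{d\mu}{d\pi}=\frac{d\mu}{d\nu}\frac{d\nu}{d\pi}$, and we can write
\[
\int\Bigl(\frac{d\mu}{d\pi}\Bigr)^q d\pi=\int\Bigl(\frac{d\mu}{d\nu}\Bigr)^q\Bigl(\frac{d\nu}{d\pi}\Bigr)^{q-1}d\nu .
\]
Apply H\"older with respect to $\nu$ using exponents $p=1/\lambda$ and $p'=1/(1-\lambda)$. This gives
\[
\Bigl(\int f^{q/\lambda}\,d\nu\Bigr)^{\lambda}\Bigl(\int g^{(q-1)/(1-\lambda)}\,d\nu\Bigr)^{1-\lambda},
\qquad g=\frac{d\nu}{d\pi}.
\]
The second factor equals $\int g^{r}\,d\pi$ with $r=\frac{q-1}{1-\lambda}+1=\frac{q-\lambda}{1-\lambda}$, which is the order appearing in the statement. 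Taking logarithms and dividing by $q-1$ produces
\[
\frac{\lambda}{q-1}\Bigl(\frac q\lambda-1\Bigr)\mathcal{R}_{q/\lambda}(\mu\|\nu)+\frac{1-\lambda}{q-1}(r-1)\mathcal{R}_r(\nu\|\pi).
\]
Checking the constants: $\frac{\lambda}{q-1}\cdot\frac{q-\lambda}{\lambda}=\frac{q-\lambda}{q-1}$ and $(1-\lambda)(r-1)=q-1$. These match the claim.

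The main obstacle is the boundary orders. For $q=1$ the factor $\frac{q-\lambda}{q-1}$ blows up, so the inequality should be read as a limit, or the statement is only meaningful for $q>1$, with $q=1$ obtained by letting $q\downarrow1$ and using monotonicity and continuity. For $q=\infty$ the argument is replaced by the trivial bound $\|fg\|_\infty\le\|f\|_\infty\|g\|_\infty$. The measure-theoretic absolute-continuity bookkeeping is routine.
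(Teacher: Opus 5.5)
Your proof is correct. The paper does not prove this lemma; it only cites Proposition~A.2 of Altschuler and Chewi. What you wrote is the standard proof behind that citation.
\begin{itemize}
\item Monotonicity: H\"older gives log-convexity of $q\mapsto\log\int f^q\,d\nu$, which together with $\int f\,d\nu=1$ yields nondecreasing slopes.
\item Weak triangle inequality: H\"older under $\nu$ with exponents $1/\lambda$ and $1/(1-\lambda)$, applied to $\int f^q g^{q-1}\,d\nu$.
\end{itemize}
Your constant bookkeeping, $\lambda(q/\lambda-1)=q-\lambda$ and $(1-\lambda)(r-1)=q-1$, is right. You also correctly observe that at $q=1$ the factor $\frac{q-\lambda}{q-1}$ is infinite, so that case of part~2 is vacuous.

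The only soft spot is the endpoint $q=1$ of monotonicity. There you identify $\lim_{q\downarrow1}\mathcal R_q(\mu\|\nu)$ with $\mathrm{KL}(\mu\|\nu)$, which the paper adopts as a convention. That identification can fail when $\mathcal R_q(\mu\|\nu)=\infty$ for every $q>1$ while the KL is finite. The inequality still holds trivially in that case, but you can avoid the issue altogether with Jensen:
\[
\log\int f^{q'}\,d\nu=\log\int f^{q'-1}\,d\mu\ \ge\ (q'-1)\int\log f\,d\mu=(q'-1)\,\mathrm{KL}(\mu\|\nu).
\]
This gives $\mathrm{KL}(\mu\|\nu)\le\mathcal R_{q'}(\mu\|\nu)$ directly.
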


\paragraph{Log-Sobolev inequality.}

A probability measure $\pi$ on $\mathbb{R}^d$ is said to satisfy a log-Sobolev inequality (LSI) with constant $C_{\mathrm{LSI}} > 0$ if for all smooth functions $f : \mathbb{R}^d \to \mathbb{R}$ with $\mathbb{E}_\pi[f^2] > 0$,
\[
\operatorname{Ent}_\pi(f^2)
:= \mathbb{E}_\pi\!\left[f^2 \log \frac{f^2}{\mathbb{E}_\pi[f^2]}\right]
\le 2 C_{\mathrm{LSI}} \, \mathbb{E}_\pi \|\nabla f\|^2.
\]
The smallest such constant $C_{\mathrm{LSI}}$ is called the log-Sobolev constant of $\pi$.

\paragraph{Moreau envelope and proximal mapping.}\label{para:Moreau}
Let $g:\mathbb{R}^d \to \mathbb{R}$ be a convex function and let $\lambda>0$. 
The Moreau envelope of $g$ is defined by
\[
g_\lambda(x)
:=
\inf_{y\in\mathbb{R}^d}
\left\{
g(y)+\frac{1}{2\lambda}\|x-y\|^2
\right\},
\]
and the associated proximal mapping is defined by
\[
\operatorname{prox}_{\lambda g}(x)
:=
\arg\min_{y\in\mathbb{R}^d}
\left\{
g(y)+\frac{1}{2\lambda}\|x-y\|^2
\right\}.
\]

We recall several standard properties (see, e.g., \citet{rockafellar1998variational, durmus2016efficientbayesiancomputationproximal}).

For convex $g$ and $\lambda>0$, the proximal mapping is nonexpansive:
\[
\|\operatorname{prox}_{\lambda g}(x)-\operatorname{prox}_{\lambda g}(y)\|
\le
\|x-y\|.
\]
And $g_\lambda$ inherits the convexity of $g$ and is differentiable with
\[
\nabla g_\lambda(x)
=
\frac{1}{\lambda}\bigl(x-\operatorname{prox}_{\lambda g}(x)\bigr).
\]
Moreover, $g_\lambda$ is $1/\lambda$-smooth:
\[
\|\nabla g_\lambda(x)-\nabla g_\lambda(y)\|
\le
\frac{1}{\lambda}\|x-y\|.
\]
The derivative of the Moreau envelope with respect to $\lambda$ is given by (see, e.g., \citet[Lemma~4.18]{habring2025diffusionabsolutezerolangevin})
\[
\frac{\partial g_\lambda}{\partial\lambda}(x)
=
-\frac{1}{2\lambda^2}
\|x-\operatorname{prox}_{\lambda g}(x)\|^2.
\]

\section{Main Results}

In this section, we provide a framework to analyze the error in the R\'enyi divergence for Langevin dynamics with moving targets. We first discuss the continuous-time diffusion with a smoothly evolving target distribution. This reveals the basic mechanism behind our analysis: the R\'enyi divergence contracts because of the log-Sobolev inequality, while the time variation of the target distribution creates an additional drift term. We then turn to the discretized setting in \eqref{ULA with moving targets}, where the targets are updated in a piecewise constant manner.

We begin with the following differential inequality for the time-inhomogeneous Langevin diffusion \eqref{eq:smooth moving target diffusion}.
For each $t\ge 0$, let
\[
\pi_t(dx)=\frac{e^{-V_t(x)}}{Z_t}\,dx,
\qquad
\rho_t=\frac{d\mu_t}{d\pi_t},
\]
where $\mu_t$ denotes the law of $X_t$. The following result controls the instantaneous change of
$\mathcal{R}_q(\mu_t\|\pi_t)$.

\begin{theorem}
\label{thm:smooth-moving-targets}
Consider the time-inhomogeneous Langevin diffusion in \eqref{eq:smooth moving target diffusion}.
Fix $q\in(1,\infty)$. Assume that, for every $t\ge 0$, the target distribution $\pi_t$ satisfies a log-Sobolev inequality with constant $C_{\mathrm{LSI}}(\pi_t)$. Assume moreover that there exists $\alpha_t>\frac{q\,C_{\mathrm{LSI}}(\pi_t)}{2}$ such that
\[
\mathbb{E}_{\pi_t}\exp\!\left(\alpha_t|\partial_t V_t|\right)<\infty .
\]
Assume further that $(t,x)\mapsto V_t(x)$ and the density $p_t(x)$ of $\mu_t$ are $C^{1,2}$, that the relevant integrals are finite and may be differentiated under the integral sign, and that the boundary terms in the integrations by parts vanish at infinity. 

Define the centered time derivative of the potential by
\[
f_t(x)
=
\partial_t V_t(x)-\mathbb{E}_{\pi_t}[\partial_t V_t].
\]
Then, for any measurable choice of $\beta_t$ satisfying
\[
\frac{q\,C_{\mathrm{LSI}}(\pi_t)}{2}<\beta_t<\alpha_t,
\]
we have
\begin{align}
\frac{d}{dt}\mathcal{R}_q(\mu_t\|\pi_t)
\le
-
\left(
\frac{2}{q\,C_{\mathrm{LSI}}(\pi_t)}
-\frac{1}{\beta_t}
\right)
\mathcal{R}_q(\mu_t\|\pi_t)
+
\frac{1}{\beta_t}
\log
\mathbb{E}_{\pi_t}
\exp\!\left(\beta_t f_t\right).
\label{eq:renyi-dissipation-moving-target}
\end{align}
\end{theorem}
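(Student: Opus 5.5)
The plan is to differentiate $F_q(t):=\int \rho_t^q\,d\pi_t$ directly, then split the derivative into a Fisher-information (dissipation) term and a term coming from the motion of $\pi_t$.

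\emph{Decomposition.} Write $\mathcal{R}_q(\mu_t\|\pi_t)=\frac{1}{q-1}\log F_q(t)$ with $F_q(t)=\int p_t^q\pi_t^{1-q}\,dx$. Differentiating under the integral sign gives
\[
\frac{d}{dt}F_q=q\int \rho_t^{q-1}\partial_t p_t\,dx+(1-q)\int \rho_t^q\,\partial_t\pi_t\,dx .
\]
Since $\partial_t\log\pi_t=-\partial_tV_t+\mathbb{E}_{\pi_t}[\partial_tV_t]=-f_t$ (using $\partial_t\log Z_t=-\mathbb{E}_{\pi_t}\partial_tV_t$), the second term equals $(q-1)\int\rho_t^q f_t\,d\pi_t$. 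For the first term, use the Fokker--Planck equation $\partial_tp_t=\nabla\cdot(\pi_t\nabla\rho_t)$ and integrate by parts with vanishing boundary terms. This is the standard fixed-target computation (as in \citet{vempala2022rapidconvergenceunadjustedlangevin}) and yields $-q(q-1)\int\rho_t^{q-2}\|\nabla\rho_t\|^2d\pi_t=-\frac{4(q-1)}{q}\mathbb{E}_{\pi_t}\|\nabla\rho_t^{q/2}\|^2$.

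\emph{Dissipation term.} Apply the LSI for $\pi_t$ to $g=\rho_t^{q/2}$:
\[
\frac{\mathbb{E}_{\pi_t}\|\nabla\rho_t^{q/2}\|^2}{F_q}\ge\frac{1}{2C_{\mathrm{LSI}}}\frac{\operatorname{Ent}_{\pi_t}(\rho_t^q)}{F_q}.
\]
The known lemma $\operatorname{Ent}_{\pi}(\rho^q)/F_q\ge q(q-1)\mathcal{R}_q$ (which follows from monotonicity of $q\mapsto(q-1)\mathcal{R}_q$, i.e.\ convexity of $\log F_q$ in $q$) then gives the dissipation contribution $\le -\frac{2}{qC_{\mathrm{LSI}}}\mathcal{R}_q\cdot$(appropriate factor). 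More precisely, the dissipation part of $\frac{d}{dt}\mathcal{R}_q$ is at most $-\frac{2}{qC_{\mathrm{LSI}}}\frac{\operatorname{Ent}_{\pi_t}(\rho_t^q)}{q(q-1)F_q}$. I will keep it in this entropy form for now, because that is what the next step needs.

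\emph{Moving-target term.} This is the main obstacle. The contribution to $\frac{d}{dt}\mathcal{R}_q$ is $\mathbb{E}_{\nu}[f_t]$, where $\nu:=\rho_t^q\pi_t/F_q$ is the escort measure. Apply the Donsker--Varadhan variational inequality with parameter $\beta_t$:
\[
\mathbb{E}_\nu f_t\le\frac{1}{\beta_t}\Bigl(\mathrm{KL}(\nu\|\pi_t)+\log\mathbb{E}_{\pi_t}e^{\beta_tf_t}\Bigr).
\]
A direct computation gives $\mathrm{KL}(\nu\|\pi_t)=\frac{\operatorname{Ent}_{\pi_t}(\rho_t^q)}{F_q}-\log F_q=\frac{\operatorname{Ent}_{\pi_t}(\rho_t^q)}{F_q}-(q-1)\mathcal{R}_q$.

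\emph{Combining.} Set $E=\operatorname{Ent}_{\pi_t}(\rho_t^q)/(q(q-1)F_q)$, so that $E\ge\mathcal{R}_q$. Then
\[
\frac{d}{dt}\mathcal{R}_q\le-\frac{2}{qC_{\mathrm{LSI}}}E+\frac{1}{\beta_t}\bigl(q(q-1)E-(q-1)\mathcal{R}_q\bigr)+\frac{1}{\beta_t}\log\mathbb{E}e^{\beta_tf_t}.
\]
The constants here need care, and the normalization of $\beta_t$ must be arranged so that the $E$-coefficient has the form $-(\frac{2}{qC_{\mathrm{LSI}}}-\frac{1}{\beta_t})$. I expect this to require either scaling Donsker--Varadhan by $q$ or bounding $\mathrm{KL}(\nu\|\pi_t)\le\frac{\operatorname{Ent}}{F_q}$, i.e.\ dropping the $-\log F_q\le0$ term. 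With $\beta_t>qC_{\mathrm{LSI}}/2$, this coefficient is negative, so the inequality $E\ge\mathcal{R}_q$ lets me replace $E$ by $\mathcal{R}_q$, which yields \eqref{eq:renyi-dissipation-moving-target}. The condition $\beta_t<\alpha_t$ only ensures that the exponential moment on the right-hand side is finite.
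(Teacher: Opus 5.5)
Your architecture (escort measure $\nu$, Donsker--Varadhan, absorbing $\mathrm{KL}(\nu\|\pi_t)$ into the dissipation, then an LSI-based lower bound by $\mathcal R_q$) is the paper's. However, the constants are wrong in three places, and, as you suspect, the combination step does not close. Neither of your proposed remedies repairs it.

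(i) With the normalized entropy from the LSI,
$\mathrm{KL}(\nu\|\pi_t)=\mathbb E_{\pi_t}\bigl[\tfrac{\rho_t^q}{F_q}\log\tfrac{\rho_t^q}{F_q}\bigr]=\operatorname{Ent}_{\pi_t}(\rho_t^q)/F_q$ holds exactly. Your extra $-\log F_q$ subtracts the normalization a second time.

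(ii) The lemma is $\operatorname{Ent}_{\pi_t}(\rho_t^q)/F_q\ge\mathcal R_q$, not $\ge q(q-1)\mathcal R_q$. With $\psi(q)=\log F_q$ one has $\operatorname{Ent}_{\pi_t}(\rho_t^q)/F_q=q\psi'(q)-\psi(q)$. Convexity of $\psi$ together with $\psi(1)=0$ gives $\psi'(q)\ge\psi(q)/(q-1)=\mathcal R_q$; monotonicity of $(q-1)\mathcal R_q$ is not what is used. The $q(q-1)$ version is false as soon as $q(q-1)>1$, in particular at $q=2$. For $\rho=\mathbf 1_A/\pi_t(A)$ (or a smooth approximation) one computes $\operatorname{Ent}_{\pi_t}(\rho^q)/F_q=\mathcal R_q=\log(1/\pi_t(A))$, which is smaller than $q(q-1)\mathcal R_q$.

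(iii) The dissipative part of $\frac{d}{dt}\mathcal R_q$ is $-\frac4q\,\mathbb E_{\pi_t}\|\nabla\rho_t^{q/2}\|^2/F_q\le-\frac{2}{qC_{\mathrm{LSI}}(\pi_t)}\operatorname{Ent}_{\pi_t}(\rho_t^q)/F_q$, without the factor $1/(q(q-1))$.

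Errors (ii) and (iii) cancel in the frozen-target part, which is why that part looked right. They do not cancel against the moving-target term. With your $E$, the coefficient of $E$ is $-\frac{2}{qC_{\mathrm{LSI}}}+\frac{q(q-1)}{\beta_t}$, whose negativity needs $\beta_t>q^2(q-1)C_{\mathrm{LSI}}/2$ rather than $\beta_t>qC_{\mathrm{LSI}}/2$. Dropping $-\log F_q$ leaves that coefficient unchanged. Rescaling the Donsker--Varadhan parameter produces a different exponential moment from the one in \eqref{eq:renyi-dissipation-moving-target}.

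The repair is to normalize $E:=\operatorname{Ent}_{\pi_t}(\rho_t^q)/F_q$, which equals $\mathrm{KL}(\nu\|\pi_t)$. Then
\[
\frac{d}{dt}\mathcal R_q(\mu_t\|\pi_t)\le-\Bigl(\frac{2}{qC_{\mathrm{LSI}}(\pi_t)}-\frac{1}{\beta_t}\Bigr)E+\frac{1}{\beta_t}\log\mathbb E_{\pi_t}e^{\beta_tf_t}.
\]
The hypothesis $\beta_t>qC_{\mathrm{LSI}}(\pi_t)/2$ is exactly positivity of the bracket, and $E\ge\mathcal R_q$ finishes; no rescaling is needed.

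Once corrected, your proof is the paper's argument written in entropy rather than Fisher-information variables. The paper applies the LSI to $\sqrt{d\nu/d\pi_t}$ to bound $\mathrm{KL}(\nu\|\pi_t)$ above by the Fisher term. It then merges everything into that term and invokes the Vempala--Wibisono lower bound, which is itself the LSI plus $E\ge\mathcal R_q$. You instead use the exact identity for the KL and apply the LSI only once, to the dissipation, and reach the same constant.
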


\begin{proof}
Define
\[
Z_q(t)=\int \rho_t^q\,d\pi_t.
\]
Then
\[
\mathcal{R}_q(\mu_t\|\pi_t)
=
\frac{1}{q-1}\log Z_q(t).
\]
The key point is to keep track of two effects separately. The first one comes from the Langevin evolution with the target $\pi_t$ frozen at time $t$, and the second one comes from the motion of the target distribution itself.

Let $p_t$ denote the density of $\mu_t$ with respect to the Lebesgue measure. Then $p_t=\rho_t\pi_t$. The Fokker--Planck equation for the diffusion gives
\[
\partial_t p_t
=
\nabla\cdot(p_t\nabla V_t)+\Delta p_t.
\]
Using $p_t=\rho_t\pi_t$ and $\pi_t\propto e^{-V_t}$, the spatial part can be rewritten as
\[
\nabla\cdot(p_t\nabla V_t)+\Delta p_t
=
\pi_t\bigl(\Delta\rho_t-\nabla V_t\cdot\nabla\rho_t\bigr).
\]
On the other hand,
\[
\partial_t p_t
=
(\partial_t\rho_t)\pi_t+\rho_t\,\partial_t\pi_t.
\]
Since
\[
\partial_t\log\pi_t
=
-\partial_t V_t+\mathbb{E}_{\pi_t}[\partial_t V_t]
=
-f_t,
\]
we obtain
\[
\partial_t\rho_t
=
\Delta\rho_t-\nabla V_t\cdot\nabla\rho_t
+
\rho_t f_t.
\]
This identity is the only place where the time dependence of the target enters the proof.

Differentiating $Z_q(t)$ yields
\begin{align*}
\frac{d}{dt}Z_q(t)
&=
q\int \rho_t^{q-1}\partial_t\rho_t\,d\pi_t
+
\int \rho_t^q\,\partial_t\log\pi_t\,d\pi_t  \\
&=
q\int \rho_t^{q-1}
\bigl(\Delta\rho_t-\nabla V_t\cdot\nabla\rho_t\bigr)\,d\pi_t
+
(q-1)\int \rho_t^q f_t\,d\pi_t .
\end{align*}
By integration by parts with respect to $\pi_t$,
\[
\int \rho_t^{q-1}
\bigl(\Delta\rho_t-\nabla V_t\cdot\nabla\rho_t\bigr)\,d\pi_t
=
-(q-1)
\int \rho_t^{q-2}\|\nabla\rho_t\|^2\,d\pi_t .
\]
Therefore
\[
\frac{d}{dt}Z_q(t)
=
-q(q-1)
\int \rho_t^{q-2}\|\nabla\rho_t\|^2\,d\pi_t
+
(q-1)
\int \rho_t^q f_t\,d\pi_t .
\]
Dividing by $(q-1)Z_q(t)$ gives
\begin{align}
\frac{d}{dt}\mathcal{R}_q(\mu_t\|\pi_t)
=
-q
\frac{
\int \rho_t^{q-2}\|\nabla\rho_t\|^2\,d\pi_t
}{
\int \rho_t^q\,d\pi_t
}
+
F_t,
\label{eq:renyi-derivative-before-dv}
\end{align}
where
\[
F_t
=
\frac{
\int \rho_t^q f_t\,d\pi_t
}{
\int \rho_t^q\,d\pi_t
}.
\]
The first term in \eqref{eq:renyi-derivative-before-dv} is the usual dissipative term. The second term is new and measures the instantaneous change of the target distribution.

To bound $F_t$, introduce the escort probability measure
\[
d\nu_{q,t}
=
\frac{\rho_t^q}
{\int \rho_t^q\,d\pi_t}
\,d\pi_t.
\]
Then
\[
F_t=\mathbb{E}_{\nu_{q,t}}[f_t].
\]
By the Donsker--Varadhan variational inequality, for any $\beta_t<\alpha_t$,
\[
\mathbb{E}_{\nu_{q,t}}[f_t]
\le
\frac{1}{\beta_t}
\mathrm{KL}(\nu_{q,t}\|\pi_t)
+
\frac{1}{\beta_t}
\log
\mathbb{E}_{\pi_t}\exp(\beta_t f_t).
\]
It remains to control the KL term by the dissipative term in \eqref{eq:renyi-derivative-before-dv}. Define
\[
h_t
=
\sqrt{\frac{d\nu_{q,t}}{d\pi_t}}
=
\frac{\rho_t^{q/2}}
{
\left(\int \rho_t^q\,d\pi_t\right)^{1/2}
}.
\]
Then
\[
\mathrm{KL}(\nu_{q,t}\|\pi_t)
=
\operatorname{Ent}_{\pi_t}(h_t^2).
\]
Applying the log-Sobolev inequality for $\pi_t$ gives
\[
\mathrm{KL}(\nu_{q,t}\|\pi_t)
\le
2C_{\mathrm{LSI}}(\pi_t)
\int \|\nabla h_t\|^2\,d\pi_t .
\]
A direct computation gives
\[
\int \|\nabla h_t\|^2\,d\pi_t
=
\frac{q^2}{4}
\frac{
\int \rho_t^{q-2}\|\nabla\rho_t\|^2\,d\pi_t
}{
\int \rho_t^q\,d\pi_t
}.
\]
Hence
\[
\mathrm{KL}(\nu_{q,t}\|\pi_t)
\le
\frac{C_{\mathrm{LSI}}(\pi_t)q^2}{2}
\frac{
\int \rho_t^{q-2}\|\nabla\rho_t\|^2\,d\pi_t
}{
\int \rho_t^q\,d\pi_t
}.
\]
Substituting this estimate into the bound for $F_t$ yields
\[
F_t
\le
\frac{C_{\mathrm{LSI}}(\pi_t)q^2}{2\beta_t}
\frac{
\int \rho_t^{q-2}\|\nabla\rho_t\|^2\,d\pi_t
}{
\int \rho_t^q\,d\pi_t
}
+
\frac{1}{\beta_t}
\log
\mathbb{E}_{\pi_t}\exp(\beta_t f_t).
\]
Combining this inequality with \eqref{eq:renyi-derivative-before-dv}, we obtain
\begin{align}
\frac{d}{dt}\mathcal{R}_q(\mu_t\|\pi_t)
\le
-
\left(
q-\frac{C_{\mathrm{LSI}}(\pi_t)q^2}{2\beta_t}
\right)
\frac{
\int \rho_t^{q-2}\|\nabla\rho_t\|^2\,d\pi_t
}{
\int \rho_t^q\,d\pi_t
}
+
\frac{1}{\beta_t}
\log
\mathbb{E}_{\pi_t}\exp(\beta_t f_t).
\label{eq:renyi-derivative-gradient-bound}
\end{align}
Finally, by the standard R\'enyi dissipation estimate under LSI
\citep[Lemma~5]{vempala2022rapidconvergenceunadjustedlangevin},
\[
\frac{
\int \rho_t^{q-2}\|\nabla\rho_t\|^2\,d\pi_t
}{
\int \rho_t^q\,d\pi_t
}
\ge
\frac{2}{q^2C_{\mathrm{LSI}}(\pi_t)}
\mathcal{R}_q(\mu_t\|\pi_t).
\]
Since $\beta_t>qC_{\mathrm{LSI}}(\pi_t)/2$, the coefficient
\[
q-\frac{C_{\mathrm{LSI}}(\pi_t)q^2}{2\beta_t}
\]
is positive. Substituting the above lower bound into
\eqref{eq:renyi-derivative-gradient-bound} gives
\[
\frac{d}{dt}\mathcal{R}_q(\mu_t\|\pi_t)
\le
-
\left(
\frac{2}{qC_{\mathrm{LSI}}(\pi_t)}
-\frac{1}{\beta_t}
\right)
\mathcal{R}_q(\mu_t\|\pi_t)
+
\frac{1}{\beta_t}
\log
\mathbb{E}_{\pi_t}\exp(\beta_t f_t).
\]
This completes the proof.
\end{proof}

Theorem~\ref{thm:smooth-moving-targets} shows how the classical R\'enyi dissipation under LSI is modified when the target distribution changes with time. The first term in \eqref{eq:renyi-dissipation-moving-target} is the contraction term. The second term depends on the exponential moment of the centered quantity $f_t=\partial_tV_t-\mathbb{E}_{\pi_t}[\partial_tV_t]$, and therefore measures the speed of the target evolution. In particular, if the target is fixed, then $f_t=0$ and the usual exponential decay of R\'enyi divergence is recovered.

We next turn to the discretized setting in \eqref{ULA with moving targets}. In the algorithmic setting considered below, the target distribution is updated only at the grid points
\[
0=t_0<t_1<\cdots<t_M=T.
\]
Thus the potential is frozen on each interval $[t_{i-1},t_i)$ and changes from $V_{\lambda_i}$ to $V_{\lambda_{i+1}}$ at the end of the interval. This piecewise constant structure is convenient for analyzing LMC, since on each interval the dynamics behaves like Langevin dynamics for a fixed target, while the change of target can be controlled through a R\'enyi divergence term between consecutive targets.

In order to analyze the R\'enyi divergence for \eqref{ULA with moving targets}, we want to track the change of $\mathcal{R}_{2}(\mu_{t_i}\|\pi_{\lambda_{i+1}})$ where $\mu_{t_i}$ is the law of $X_{t_i}$. In other words, we need to bound $\mathcal{R}_{2}(\mu_{t_i}\|\pi_{\lambda_{i+1}})$ in terms of $\mathcal{R}_{2}(\mu_{t_{i-1}}\|\pi_{\lambda_{i}})$. Our strategy is to use weak triangle inequality to bound $\mathcal{R}_{2}(\mu_{t_i}\|\pi_{\lambda_{i+1}})$ by $\mathcal{R}_{q_1}(\mu_{t_{i}}\|\pi_{\lambda_{i}})$ and $\mathcal{R}_{q_2}(\pi_{\lambda_{i}}\|\pi_{\lambda_{i+1}})$ for some $q_1, q_2$. We use order $2$ as the recursively propagated R\'enyi order for simplicity. The same argument extends to any fixed order $p\ge2$; see Remark~\ref{rem:general-renyi-orders} and Appendix~\ref{appendix:general-renyi-orders}.

To bound $\mathcal{R}_{q_1}(\mu_{t_{i}}\|\pi_{\lambda_{i}})$ in terms of $\mathcal{R}_{2}(\mu_{t_{i-1}}\|\pi_{\lambda_{i}})$, we need a R\'enyi divergence bound for LMC. Note that we need to choose $q_1>2$ in order to use weak triangle inequality. \citet[Theorem~4]{chewi2024analysislangevinmontecarlo} provided a result in such form. However, they only give the bound when the number of iterations exceeds a threshold $N_0$, making it unsuitable for direct application to our analysis of LMC with moving targets. Thus we give the following lemma by adapting their proof technique.

\begin{lemma}
\label{lem:Renyi convergence for ULA}
Let $\mu_t$ be the law of the continuous interpolation of LMC with the target distribution $\pi \propto e^{-V}$ on $\mathbb{R}^d$
    \[
    x_t = x_{kh} - (t-kh) \nabla V(x_{kh}) + \sqrt{2} (B_t-B_{kh}),\qquad t\in[kh,(k+1)h]
    \]
    and $\rho_t = d\mu_t/d\pi$. 
Assume that the target distribution $\pi \propto e^{-V}$ satisfies a log-Sobolev inequality (LSI) with constant $C_{\mathrm{LSI}}\ge 1$ and the gradient $\nabla V$ is $L$-Lipschitz with $L\ge 1$. The step size $h>0$ satisfies $h \le \frac{1}{10000\,C_{\mathrm{LSI}}L^2}$.

Then, for the first $N$ steps from $0$ to $Nh$ with $Nh\le 2C_{\mathrm{LSI}}\ln2$, the following recursion holds:
\begin{align*}
\mathcal{R}_{1 + e^{Nh/(2C_{\mathrm{LSI}})}}(\mu_{Nh}\|\pi) \le e^{-Nh/(6C_{\mathrm{LSI}})}\mathcal{R}_{2}(\mu_0\|\pi) + 200NdL^2h^2.
\end{align*}

\end{lemma}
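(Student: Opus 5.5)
We would follow the interpolation argument of \citet[Theorem~4]{chewi2024analysislangevinmontecarlo} and of \citet{vempala2022rapidconvergenceunadjustedlangevin}, but let the R\'enyi order vary in time. Concretely, set $q(t)=1+e^{t/(2C_{\mathrm{LSI}})}$. Then $q(0)=2$, and the assumption $Nh\le 2C_{\mathrm{LSI}}\ln 2$ ensures $q(t)\in[2,3]$ on $[0,Nh]$. The plan is to differentiate $t\mapsto \mathcal{R}_{q(t)}(\mu_t\|\pi)$ on each interval $[kh,(k+1)h]$ and obtain a differential inequality of the form
\[
\frac{d}{dt}\mathcal{R}_{q(t)}(\mu_t\|\pi)\le -\frac{1}{6C_{\mathrm{LSI}}}\mathcal{R}_{q(t)}(\mu_t\|\pi)+c\,dL^2h .
\]
Gr\"onwall's inequality over $N$ steps then gives $e^{-Nh/(6C_{\mathrm{LSI}})}\mathcal{R}_2(\mu_0\|\pi)+cNdL^2h^2$, with $c\le 200$. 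This is the required bound, stated for every $N$ in the admissible range and not only past a burn-in $N_0$.

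\textbf{Step 1 (derivative with varying order).} Write $\mathcal{R}_q=\frac{1}{q-1}\log\int\rho^q\,d\pi$. Differentiating in $t$ with $q=q(t)$ gives two contributions. The first is a $\dot q$ term. Using $\frac{\partial}{\partial q}\bigl[(q-1)\mathcal R_q\bigr]$ and convexity, this term equals
\[
\frac{\dot q}{q(q-1)}\Bigl[\mathrm{KL}(\nu_q\|\pi)-\mathcal R_q\Bigr],
\]
where $\nu_q\propto\rho^q\pi$ is the escort measure already used in the proof of Theorem~\ref{thm:smooth-moving-targets}. The second is the fixed-order time derivative. For the interpolated LMC, the Fokker--Planck equation has drift $-\mathbb{E}[\nabla V(x_{kh})\mid x_t=x]$. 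As in \citet{vempala2022rapidconvergenceunadjustedlangevin}, the fixed-order derivative is therefore
\[
-\,q\,\frac{\int\rho^{q-2}\|\nabla\rho\|^2d\pi}{\int\rho^q d\pi}\;+\;q\,\mathbb{E}_{\tilde\nu}\bigl\langle \nabla V(x_t)-\nabla V(x_{kh}),\nabla\log\rho(x_t)\bigr\rangle ,
\]
where $\tilde\nu$ is the escort-reweighted joint law. Applying Young's inequality splits the cross term into half the Fisher-type term plus $\frac q2\,\mathbb{E}_{\tilde\nu}\|\nabla V(x_t)-\nabla V(x_{kh})\|^2$.

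\textbf{Step 2 (discretization error under the escort measure).} This is the main obstacle. The error $\|\nabla V(x_t)-\nabla V(x_{kh})\|^2\le L^2\|x_t-x_{kh}\|^2$ must be bounded under the reweighted measure, not under $\mu_t$. Following Chewi et al., we would apply the Donsker--Varadhan inequality with a change of measure: the escort-weighted expectation is at most $\frac1\beta\mathrm{KL}(\tilde\nu\|\cdot)+\frac1\beta\log\mathbb{E}\exp(\beta L^2\|x_t-x_{kh}\|^2)$. Conditionally on $x_{kh}$, the increment $x_t-x_{kh}=-(t-kh)\nabla V(x_{kh})+\sqrt2(B_t-B_{kh})$ is Gaussian. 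Its exponential moment is finite because $\beta L^2 h\ll1$, which is where $h\le 1/(10^4C_{\mathrm{LSI}}L^2)$ enters. The Brownian part contributes $O(dL^2h)$. The drift part involves $h^2L^2\|\nabla V(x_{kh})\|^2$, which we would control through $\|\nabla V\|^2\lesssim \|\nabla\log\rho\|^2$-type terms or the KL of the escort measure, again via Donsker--Varadhan and the Gaussian moment of $\|\nabla V\|$ under $\pi$ ($\mathbb{E}_\pi\|\nabla V\|^2\le dL$). Every KL term produced this way would then be absorbed into the remaining quarter of the Fisher term using the LSI, exactly as in the proof of Theorem~\ref{thm:smooth-moving-targets}. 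The small constants all come from $h\le 10^{-4}C_{\mathrm{LSI}}^{-1}L^{-2}$.

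\textbf{Step 3 (closing with LSI).} After this absorption, the remaining dissipation is at least $c'q\,\frac{\int\rho^{q-2}\|\nabla\rho\|^2}{\int\rho^q}$. By the LSI for the escort measure, this is $\ge \frac{c''}{qC_{\mathrm{LSI}}}\mathrm{KL}(\nu_q\|\pi)$, and by \citet[Lemma~5]{vempala2022rapidconvergenceunadjustedlangevin} it is also $\gtrsim\mathcal R_q/(qC_{\mathrm{LSI}})$. The schedule $\dot q=(q-1)/(2C_{\mathrm{LSI}})$ is chosen so that the positive $\dot q\,\mathrm{KL}(\nu_q\|\pi)/(q(q-1))$ term is dominated by part of this dissipation. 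Meanwhile the $-\dot q\,\mathcal R_q/(q(q-1))$ term is nonpositive and may be discarded. With $q\le3$, the net contraction rate is at least $1/(6C_{\mathrm{LSI}})$, and the additive term collects to at most $200dL^2h$ per unit time, giving $200NdL^2h^2$ after integration.
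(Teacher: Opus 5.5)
Your architecture matches the paper's. You use the same schedule $q(t)=1+e^{t/(2C_{\mathrm{LSI}})}$, the same escort/Fisher bookkeeping, and the same closure through LSI and \citet[Lemma~5]{vempala2022rapidconvergenceunadjustedlangevin}. The paper cites Proposition~21 and Lemmas~19--20 of \citet{chewi2024analysislangevinmontecarlo}; you re-derive them, for $\mathcal R_{q(t)}$ rather than $\Phi=\frac{q-1}{q}\mathcal R_{q(t)}$. As written, though, Step~2 has a gap. The drift part $h^2L^2\|\nabla V(x_{kh})\|^2$ is evaluated at $x_{kh}$, but the weight is $\psi_t(x_t)$. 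Under the reweighted path measure $d\mathbb Q=\psi_t(x_t)\,d\mathbb P$, only the law of $x_t$ is the escort measure $\nu_q$, for which $\mathrm{KL}(\nu_q\|\pi)\le 2C_{\mathrm{LSI}}I(t)$. The $\mathbb Q$-law of $x_{kh}$ has no controlled relation to $\pi$. Integration by parts ($\mathbb E_\nu\|\nabla V\|^2\le\mathbb E_\nu\|\nabla\log\frac{d\nu}{d\pi}\|^2+2dL$) would produce the Fisher information of that law. Donsker--Varadhan against $\pi$ would produce its KL to $\pi$. Neither is bounded by $I(t)$. Donsker--Varadhan against $\mathbb P$ would instead need exponential moments of $\|\nabla V\|^2$ under $\mu_{kh}$, which you do not have. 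The missing step is the reduction in the paper's appendix: since $Lh\le1/3$, $\|\nabla V(x_{kh})\|\le\frac32\|\nabla V(x_t)\|+\frac{3\sqrt2L}{2}\|B_t-B_{kh}\|$. The $x_t$-term is then handled under $\nu_q$, giving $4I+2dL$. The Brownian term is handled by your Donsker--Varadhan argument. Together these yield \eqref{eq:upper bound for D}.

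Second, Step~3 does not certify the rate $1/(6C_{\mathrm{LSI}})$ once you discard $-\frac{\dot q}{q(q-1)}\mathcal R_q$. Write the Fisher term as $qJ$ with $J=\frac{\int\rho^{q-2}\|\nabla\rho\|^2\,d\pi}{\int\rho^q\,d\pi}$. You give half to Young and a quarter to the $\dot q\,\mathrm{KL}$ term. The leftover $\frac q4J$ guarantees only $\frac{1}{2qC_{\mathrm{LSI}}}\mathcal R_q$ via Lemma~5, which equals $\frac{1}{6C_{\mathrm{LSI}}}\mathcal R_q$ at $q=3$ with no slack. Absorbing the $I$-dependent part of $\frac q2D$ then pushes the certified rate strictly below $1/(6C_{\mathrm{LSI}})$ as $q(t)\to3$. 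Either of two fixes works:
\begin{enumerate}
\item Keep the discarded term. With $\dot q=(q-1)/(2C_{\mathrm{LSI}})$ it equals $-\frac{1}{2qC_{\mathrm{LSI}}}\mathcal R_q$, which alone gives rate at least $1/(6C_{\mathrm{LSI}})$ for $q\le3$. The leftover Fisher term then need only stay nonnegative.
\item Use the split $q\langle a,b\rangle\le\frac q4\|b\|^2+q\|a\|^2$. With $36L^2h^2+192hL^2C_{\mathrm{LSI}}\le1/q^2$, a full $\frac q4J$ survives absorption. This is what Proposition~21 encodes and what the paper uses; the paper then recovers the discarded term only as the factor $\frac{q(T)}{2(q(T)-1)}\le1$ when converting $\Phi$ back to $\mathcal R_q$.
\end{enumerate}
With either fix and \eqref{eq:upper bound for D}, the per-step additive term is below $130\,dL^2h^2$, within the claimed $200\,dL^2h^2$.
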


\begin{proof}[Sketch of the proof]
The proof follows the proof of \citet[Theorem~4]{chewi2024analysislangevinmontecarlo}. 

Define the time-varying Rényi order $q(t) = 1 + e^{t/(2C_{\mathrm{LSI}})}$ and $\Phi(t) = \frac{1}{q(t)}\ln \int \rho_t^{q(t)}\,d\pi$, so that $\mathcal{R}_{q(t)}(\mu_t\|\pi) = \frac{q(t)}{q(t)-1}\Phi(t)$.
Then we have $q(0)=2$ and $q(t)\in[2,3]$ for $0\le t\le 2C_{\mathrm{LSI}}\ln 2$.
For $t\in[kh,(k+1)h]$ with $(k+1)h\le 2C_{\mathrm{LSI}}\ln2$, \citet[Proposition~21]{chewi2024analysislangevinmontecarlo} reads
\begin{align}
\partial_t\Phi(t) \le -\frac{2(q(t)-1)}{q(t)^2}\,I(t) + (q(t)-1)D(t),\label{eq:differential inequality}
\end{align}
where $I(t)=\frac{\mathbb{E}_\pi[\|\nabla(\rho_t^{q(t)/2})\|^2]}{\mathbb{E}_\pi(\rho_t^{q(t)})}$ and $D(t)=\mathbb{E}[\psi_t(x_t)\|\nabla V(x_t)-\nabla V(x_{kh})\|^2]$, $\psi_t=\rho_t^{q(t)-1}/\mathbb{E}_\pi(\rho_t^{q(t)})$.

For $D(t)$, we use the upper bound in the proof of \citet[Theorem~4]{chewi2024analysislangevinmontecarlo}:
\begin{align}
D(t) \le 36L^2(t-kh)^2 I(t) + 192hL^2C_{\mathrm{LSI}}I(t) + 18dL^3(t-kh)^2 + 84dL^2(t-kh).\label{eq:upper bound for D}
\end{align}
For completeness, we include a brief proof of this result (following \citet{chewi2024analysislangevinmontecarlo}) in Appendix \ref{subsec:upper bound for D}.

Plugging \eqref{eq:upper bound for D} into the differential inequality \eqref{eq:differential inequality} and merging the $I(t)$ terms give
\begin{align*}
\partial_t\Phi(t) \le &(q(t)-1)\Bigl(-\frac{2}{q(t)^2}+36L^2(t-kh)^2+192hL^2C_{\mathrm{LSI}}\Bigr)I(t)\\ &+ 18(q(t)-1)dL^3(t-kh)^2+84(q(t)-1)dL^2(t-kh).
\end{align*}
The step size condition ensures $36L^2h^2+192hL^2C_{\mathrm{LSI}}\le 1/9 \le 1/q(t)^2$, hence the coefficient of $I$ is at most $-(q-1)/q^2$. 
Applying the LSI lower bound $I(t)\ge \frac{1}{2C_{\mathrm{LSI}}}\cdot\frac{q(t)}{q(t)-1}\Phi(t)$ in \citet[Lemma~5]{vempala2022rapidconvergenceunadjustedlangevin} yields
\[
\partial_t\Phi(t) \le -\frac{1}{2q(t)C_{\mathrm{LSI}}}\Phi(t) + 18(q(t)-1)dL^3(t-kh)^2 + 84(q(t)-1)dL^2(t-kh).
\]
Since $q(t)\in[2,3]$, we have $\frac{1}{2q(t)C_{\mathrm{LSI}}}\ge\frac{1}{6C_{\mathrm{LSI}}}$ and $q(t)-1\le 2$, so
\[
\partial_t\Phi(t) \le -\frac{1}{6C_{\mathrm{LSI}}}\Phi(t) + 36dL^3(t-kh)^2 + 168dL^2(t-kh).
\]
Integrating this differential inequality from $kh$ to $(k+1)h$ (with integrating factor $e^{t/(6C_{\mathrm{LSI}})}$) gives
\begin{align*}
\Phi((k+1)h) &\le e^{-h/(6C_{\mathrm{LSI}})}\Phi(kh) + 12dL^3h^3 + 84dL^2h^2\\
&\le e^{-h/(6C_{\mathrm{LSI}})}\Phi(kh)  + 100dL^2h^2.
\end{align*}
Thus, for $Nh\le 2C_{\mathrm{LSI}}\ln2$, 
\[
\Phi(Nh) \le e^{-Nh/(6C_{\mathrm{LSI}})}\Phi(0) + 100NdL^2h^2.
\]
Thus,
\begin{align*}
\mathcal{R}_{1 + e^{Nh/(2C_{\mathrm{LSI}})}}(\mu_{Nh}\|\pi) &\le \frac{q(Nh)}{q(Nh)-1}(e^{-Nh/(6C_{\mathrm{LSI}})}\frac{1}{2}\mathcal{R}_{2}(\mu_0\|\pi) + 100NdL^2h^2)\\
&\le e^{-Nh/(6C_{\mathrm{LSI}})}\mathcal{R}_{2}(\mu_0\|\pi) + 200NdL^2h^2.
\end{align*}
\end{proof}

The bound decomposes into an exponentially decaying term and a discretization error accumulated along the trajectory. A notable aspect of the bound is that the R\'enyi parameter is $q = 1 + e^{Nh/(2C_{\mathrm{LSI}})}$, which increases with time. This reflects the hypercontractive behavior induced by the LSI: as the dynamics evolve, one gains control of higher-order R\'enyi divergences, with a growth rate proportional to \(Nh/C_{\mathrm{LSI}}\).

The proof is adapted from \citet[Theorem~4]{chewi2024analysislangevinmontecarlo}, and can be viewed as an explicit extraction of the short-time regime of their argument. In particular, while \citet{chewi2024analysislangevinmontecarlo} ultimately fix the R\'enyi order \(q\) and analyze the large-time regime \(N \ge N_0\), we retain the time-dependent choice of \(q\) and restrict to $Nh \le 2C_{\mathrm{LSI}}\ln 2$, making the growth of the R\'enyi parameter explicit.

As a consequence of the above lemma on the convergence of LMC, we obtain the following error recursion for \eqref{ULA with moving targets}.
\begin{theorem}
\label{thm:Renyi convergence for ULA with moving targets}
Consider LMC with moving targets in \eqref{ULA with moving targets}.
Assume that the target distribution $\pi_{\lambda_i} \propto e^{-V_{\lambda_i}}$ on $\mathbb{R}^d$ satisfies a log-Sobolev inequality (LSI) with constant $C_{\mathrm{LSI}}(\pi_{\lambda_i})\ge 1$ and the gradient $\nabla V_{\lambda_i}$ is $L_{\lambda_i}$-Lipschitz with $L_{\lambda_i}\ge 1$. The step size $h_i>0$ satisfies $h_i \le \frac{1}{10000\,C_{\mathrm{LSI}}(\pi_{\lambda_i})L_{\lambda_i}^2}$.
Then, for \eqref{ULA with moving targets} with $N_ih_i\le 2C_{\mathrm{LSI}}(\pi_{\lambda_i})\ln2$, the following recursion holds:
\begin{align*}
\mathcal{R}_{2}(\mu_{t_i}\|\pi_{\lambda_{i+1}})
\le e^{-N_ih_i/(12C_{\mathrm{LSI}}(\pi_{\lambda_i}))}\mathcal{R}_{2}(\mu_{t_{i-1}}\|\pi_{\lambda_{i}}) + 400N_idL_{\lambda_i}^2h_i^2+\mathcal{R}_{\frac{24C_{\mathrm{LSI}}(\pi_{\lambda_i})}{N_ih_i}}(\pi_{\lambda_{i}}\|\pi_{\lambda_{i+1}}),
\end{align*}
where $\mu_{t_i}$ is the law of $X_{t_i}$.

\end{theorem}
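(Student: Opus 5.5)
We combine Lemma~\ref{lem:Renyi convergence for ULA} on stage $i$ with the weak triangle inequality of Lemma~\ref{lemma:renyi-basic} at the switch from $\pi_{\lambda_i}$ to $\pi_{\lambda_{i+1}}$. Write $C=C_{\mathrm{LSI}}(\pi_{\lambda_i})$, $s=N_ih_i/(2C)\in(0,\ln 2]$ and $q_1=1+e^{s}\in(2,3]$. On $[t_{i-1},t_i)$ the iterates \eqref{ULA with moving targets} are exactly LMC for the fixed target $\pi_{\lambda_i}$, with initial law $\mu_{t_{i-1}}$, step size $h_i$ and $N_i$ steps. The hypotheses of Lemma~\ref{lem:Renyi convergence for ULA} are met, so
\[
\mathcal{R}_{q_1}(\mu_{t_i}\|\pi_{\lambda_i})\le e^{-s/3}\mathcal{R}_2(\mu_{t_{i-1}}\|\pi_{\lambda_i})+200N_idL_{\lambda_i}^2h_i^2 .
\]

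Next we apply the weak triangle inequality with $q=2$, $\mu=\mu_{t_i}$, $\nu=\pi_{\lambda_i}$, $\pi=\pi_{\lambda_{i+1}}$, choosing $\lambda=2/q_1\in[2/3,1)$ so that $q/\lambda=q_1$. This gives
\[
\mathcal{R}_2(\mu_{t_i}\|\pi_{\lambda_{i+1}})\le (2-\lambda)\,\mathcal{R}_{q_1}(\mu_{t_i}\|\pi_{\lambda_i})+\mathcal{R}_{\frac{2-\lambda}{1-\lambda}}(\pi_{\lambda_i}\|\pi_{\lambda_{i+1}}).
\]
Two elementary estimates then remain, and these are the main obstacle, since all constants have to be tracked.
\begin{enumerate}
\item \emph{The prefactor.} We have $2-\lambda=2e^{s}/(1+e^{s})$. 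We need $(2-\lambda)e^{-s/3}\le e^{-s/6}$, i.e.\ $2e^{s/6}\le 1+e^{-s}$ after dividing by $e^{s}$. This can fail near $s=0$: the derivative of $2e^{s/6}$ is $1/3$, while that of $1+e^{-s}$ is $-1$. So an exact match is not available without an adjustment, and we would first check this numerically for $s\in(0,\ln2]$. If it fails, we fall back on absorbing the slack differently: we use $2-\lambda\le 1+s/2\le e^{s/2}$, and if necessary apply the lemma with the slightly stronger decay. We expect the paper's $1/12$ exponent to come from such bookkeeping, possibly by writing $2-\lambda\le 2$ and using the margin in the discretization term.
\item \emph{The order of the target-shift term.} We need $\frac{2-\lambda}{1-\lambda}=\frac{2e^{s}}{e^{s}-1}$. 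Using $e^{s}-1\ge s$ and $e^{s}\le 2$, this is at most $4/s=8C/(N_ih_i)\le 24C/(N_ih_i)$. Monotonicity in Lemma~\ref{lemma:renyi-basic} then bounds the last term by $\mathcal{R}_{24C/(N_ih_i)}(\pi_{\lambda_i}\|\pi_{\lambda_{i+1}})$.
\end{enumerate}

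Finally, since $2-\lambda\le 2$, the discretization term becomes $400N_idL_{\lambda_i}^2h_i^2$. Combining the three pieces gives the stated recursion. If the prefactor estimate in step 1 holds only with the weaker exponent, that is exactly what the factor $12$ in the statement absorbs.
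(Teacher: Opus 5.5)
There is a genuine gap in your step 1. It comes from the choice $\lambda=2/q_1$ itself, not from loose bookkeeping. With this choice $2-\lambda=\frac{2e^{s}}{1+e^{s}}=1+\tanh(s/2)$. So the coefficient you get in front of $\mathcal{R}_2(\mu_{t_{i-1}}\|\pi_{\lambda_i})$ is $(2-\lambda)e^{-s/3}=\frac{2e^{2s/3}}{1+e^{s}}=1+s/6+O(s^2)$. In fact it exceeds $1$ for every $s\in(0,\ln 2]$: the function $2e^{2s/3}-1-e^{s}$ vanishes at $0$ and is increasing up to $s=3\ln(4/3)>\ln 2$. You therefore get no contraction at all.

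None of your fallbacks repairs this. Using $2-\lambda\le e^{s/2}$ gives $e^{s/6}$, using $2-\lambda\le 2$ gives $2e^{-s/3}$, and Lemma~\ref{lem:Renyi convergence for ULA} provides no decay stronger than $e^{-s/3}$. Incidentally, the rearranged condition you wrote, $2e^{s/6}\le 1+e^{-s}$, is not equivalent to $\frac{2e^s}{1+e^s}\le e^{s/6}$. The correct form is $2e^{5s/6}\le 1+e^{s}$, which also fails on all of $(0,\ln 2]$. The factor $12$ in the statement is already the halved exponent, so it cannot absorb a prefactor of size $1+s/2$.

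The missing idea is that you should not spend the whole order gain $q_1=1+e^{s}$ from hypercontractivity in the weak triangle inequality. By monotonicity, any $\lambda$ with $2/\lambda\le q_1$ is admissible, and the paper takes $\lambda=1-\frac{N_ih_i}{12C}=1-s/6$. The steps then go through as follows.
\begin{itemize}
\item Since $2/(1-s/6)\le 1+e^{s}$ on $(0,\ln 2]$, the lemma still controls $\mathcal{R}_{2/\lambda}(\mu_{t_i}\|\pi_{\lambda_i})$.
\item The prefactor is $2-\lambda=1+s/6\le e^{s/6}$. This gives $(1+s/6)e^{-s/3}\le e^{-s/6}=e^{-N_ih_i/(12C)}$, and also $(1+s/6)\cdot 200\le 400$ for the discretization term.
\item The shift order is $\frac{2-\lambda}{1-\lambda}=1+\frac{12C}{N_ih_i}\le\frac{24C}{N_ih_i}$.
\end{itemize}
The price of the smaller $1-\lambda$ is a larger order on the target-shift term. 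That is why the statement has $24C/(N_ih_i)$ rather than the $8C/(N_ih_i)$ your step 2 produced. The unused surplus in your step 2 was the signal that $1-\lambda$ had been chosen too large; the rest of your argument (step 2 and the discretization factor) is fine.
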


\begin{proof}[Proof]
Applying Lemma~\ref{lem:Renyi convergence for ULA}, we have
\begin{align*}
\mathcal{R}_{1 + e^{N_ih_i/(2C_{\mathrm{LSI}}(\pi_{\lambda_i}))}}(\mu_{t_i}\|\pi_{\lambda_{i}}) 
\le e^{-N_ih_i/(6C_{\mathrm{LSI}}(\pi_{\lambda_i}))}\mathcal{R}_{2}(\mu_{t_{i-1}}\|\pi_{\lambda_{i}}) + 200N_idL_{\lambda_i}^2h_i^2.
\end{align*}

Now we can bound the R\'enyi divergence between $\mu_{t_i}$ and $\pi_{\lambda_{i+1}}$ as
\begin{align*}
\mathcal{R}_{2}(\mu_{t_i}\|\pi_{\lambda_{i+1}}) 
\le &(1+\frac{N_ih_i}{12C_{\mathrm{LSI}}(\pi_{\lambda_i})})\mathcal{R}_{\frac{2}{1-N_ih_i/(12C_{\mathrm{LSI}}(\pi_{\lambda_i}))}}(\mu_{t_{i}}\|\pi_{\lambda_{i}}) + \mathcal{R}_{1+\frac{1}{N_ih_i/(12C_{\mathrm{LSI}}(\pi_{\lambda_i}))}}(\pi_{\lambda_{i}}\|\pi_{\lambda_{i+1}})\\
\le &(1+\frac{N_ih_i}{12C_{\mathrm{LSI}}(\pi_{\lambda_i})})\mathcal{R}_{1 + e^{N_ih_i/(2C_{\mathrm{LSI}}(\pi_{\lambda_i}))}}(\mu_{t_{i}}\|\pi_{\lambda_{i}}) + \mathcal{R}_{\frac{24C_{\mathrm{LSI}}(\pi_{\lambda_i})}{N_ih_i}}(\pi_{\lambda_{i}}\|\pi_{\lambda_{i+1}})\\
\le &e^{-N_ih_i/(12C_{\mathrm{LSI}}(\pi_{\lambda_i}))}\mathcal{R}_{2}(\mu_{t_{i-1}}\|\pi_{\lambda_{i}}) + 400N_idL_{\lambda_i}^2h_i^2+\mathcal{R}_{\frac{24C_{\mathrm{LSI}}(\pi_{\lambda_i})}{N_ih_i}}(\pi_{\lambda_{i}}\|\pi_{\lambda_{i+1}}).
\end{align*}
The first inequality follows from the weak triangle inequality in Lemma~\ref{lemma:renyi-basic}. In the second inequality, we used monotonicity in Lemma~\ref{lemma:renyi-basic} and $\frac{N_ih_i}{C_{\mathrm{LSI}}(\pi_{\lambda_i})}\le 2\ln2$.
\end{proof}

\begin{remark}[General R\'enyi orders]
\label{rem:general-renyi-orders}
The use of $\mathcal R_2$ in
Theorem~\ref{thm:Renyi convergence for ULA with moving targets}
is not intrinsic to the argument. Fix $p\ge2$ and write
\[
m_p=2p-1,
\qquad
T_i=N_i h_i.
\]
If
\[
T_i\le2C_i\log2,
\qquad
h_i
\le
\frac{1}{10000\,C_iL_i^2}
\left(
\frac{3}{m_p}
\right)^2,
\]
where
\[
C_i=C_{\mathrm{LSI}}(\pi_{\lambda_i}),
\qquad
L_i=L_{\lambda_i},
\]
then the same argument gives
\[
\begin{aligned}
\mathcal R_p(\mu_{t_i}\|\pi_{\lambda_{i+1}})
\le
\exp\left(
-\frac{T_i}{4C_i m_p}
\right)
\mathcal R_p(\mu_{t_{i-1}}\|\pi_{\lambda_i})
+
400(p-1)N_i dL_i^2h_i^2
+
\mathcal R_{8C_i m_p/T_i}
(\pi_{\lambda_i}\|\pi_{\lambda_{i+1}}).
\end{aligned}
\]
Thus, propagating a larger R\'enyi order is possible, at the cost of
a step size of order $p^{-2}$, an initial $\mathcal R_p$ bound, and
control of consecutive targets at R\'enyi order
$O(pC_i/T_i)$. Setting $p=2$ recovers
Theorem~\ref{thm:Renyi convergence for ULA with moving targets}.
A proof is given in
Appendix~\ref{appendix:general-renyi-orders}.
\end{remark}

The result for \eqref{Langevin diffusion with moving targets} naturally arises as the continuous-time limit of \eqref{ULA with moving targets} as the step size goes to zero.
\begin{theorem}
\label{thm:Renyi convergence for Langevin diffusion with moving targets}
Consider Langevin diffusion with moving targets in \eqref{Langevin diffusion with moving targets}.
Assume that the target distribution $\pi_{\lambda_i} \propto e^{-V_{\lambda_i}}$ on $\mathbb{R}^d$ satisfies a log-Sobolev inequality (LSI) with constant $C_{\mathrm{LSI}}(\pi_{\lambda_i})\ge 1$ and the gradient $\nabla V_{\lambda_i}$ is Lipschitz.
Then, for \eqref{Langevin diffusion with moving targets} with $t_i-t_{i-1}\le 2C_{\mathrm{LSI}}(\pi_{\lambda_i})\ln2$, the following recursion holds:
\begin{align*}
\mathcal{R}_{2}(\mu_{t_i}\|\pi_{\lambda_{i+1}})
\le e^{-(t_i-t_{i-1})/(12C_{\mathrm{LSI}}(\pi_{\lambda_i}))}\mathcal{R}_{2}(\mu_{t_{i-1}}\|\pi_{\lambda_{i}}) +\mathcal{R}_{\frac{24C_{\mathrm{LSI}}(\pi_{\lambda_i})}{t_i-t_{i-1}}}(\pi_{\lambda_{i}}\|\pi_{\lambda_{i+1}}),
\end{align*}
where $\mu_{t_i}$ is the law of $X_{t_i}$.

\end{theorem}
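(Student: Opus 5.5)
We follow the proof of Theorem~\ref{thm:Renyi convergence for ULA with moving targets} with the discretization error removed. The only new ingredient is a continuous-time analogue of Lemma~\ref{lem:Renyi convergence for ULA}. Fix a stage $i$ and write $C=C_{\mathrm{LSI}}(\pi_{\lambda_i})$ and $\tau=t_i-t_{i-1}\le 2C\ln 2$. On $[t_{i-1},t_i)$ the process is the homogeneous Langevin diffusion with stationary law $\pi_{\lambda_i}$. We claim that
\[
\mathcal{R}_{1+e^{\tau/(2C)}}(\mu_{t_i}\|\pi_{\lambda_i})\le e^{-\tau/(6C)}\,\mathcal{R}_2(\mu_{t_{i-1}}\|\pi_{\lambda_i}).
\]

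To prove the claim, set $q(s)=1+e^{(s-t_{i-1})/(2C)}$ and $\Phi(s)=\frac{1}{q(s)}\ln\int\rho_s^{q(s)}\,d\pi_{\lambda_i}$ for $s\in[t_{i-1},t_i]$. The computation behind \eqref{eq:differential inequality} in Chewi et al.\ specializes to the exact diffusion. The drift is now evaluated at the current point, so the error term $D$ vanishes identically, and we get
\[
\partial_s\Phi\le-\frac{2(q-1)}{q^2}I.
\]
Applying the LSI lower bound $I\ge\frac{1}{2C}\frac{q}{q-1}\Phi$ from \citet[Lemma~5]{vempala2022rapidconvergenceunadjustedlangevin} gives
\[
\partial_s\Phi\le-\frac{1}{qC}\Phi\le-\frac{1}{3C}\Phi,
\]
since $q\le 3$. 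The hypercontractive growth of $q$ is what makes the time-derivative term from $\dot q$ harmless; this is already accounted for in Chewi et al.'s Proposition~21 via the choice $\dot q=(q-1)/(2C)$.

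As a fallback we can avoid re-deriving that inequality. Apply Lemma~\ref{lem:Renyi convergence for ULA} with step size $h=\tau/N$ and let $N\to\infty$. The discretization term is $200NdL^2h^2=200dL^2\tau^2/N\to0$, and the step-size condition holds for large $N$. The hypotheses of that lemma need $L\ge1$, which we can assume without loss of generality because enlarging a Lipschitz constant is free. The ULA laws converge weakly to $\mu_{t_i}$, and Rényi divergence is weakly lower semicontinuous in its first argument for fixed second argument by the Donsker--Varadhan-type variational formula. Hence we obtain the claim with factor $e^{-\tau/(6C)}$. Justifying this limit (weak convergence of the Euler scheme together with lower semicontinuity) is the main technical point. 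Either route yields the claim.

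Finally, apply the weak triangle inequality of Lemma~\ref{lemma:renyi-basic} with $q=2$ and $\lambda=\tau/(12C)\in(0,1)$:
\[
\mathcal{R}_2(\mu_{t_i}\|\pi_{\lambda_{i+1}})\le\Bigl(1+\tfrac{\tau}{12C}\Bigr)\mathcal{R}_{2/(1-\tau/(12C))}(\mu_{t_i}\|\pi_{\lambda_i})+\mathcal{R}_{1+12C/\tau}(\pi_{\lambda_i}\|\pi_{\lambda_{i+1}}).
\]
Three elementary facts finish the argument, exactly as in the proof of Theorem~\ref{thm:Renyi convergence for ULA with moving targets}:
\begin{itemize}
\item $2/(1-\tau/(12C))\le 1+e^{\tau/(2C)}$ for $\tau/C\le 2\ln2$, so monotonicity lets us pass to the order in the claim;
\item $1+12C/\tau\le 24C/\tau$, since $\tau\le 12C$, so monotonicity bounds the second term by the stated one;
\item $(1+\tau/(12C))\,e^{-\tau/(6C)}\le e^{-\tau/(12C)}$, because $1+x\le e^{x}$.
\end{itemize}
Together these give the stated recursion.
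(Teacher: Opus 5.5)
Your proposal is correct. Your fallback is essentially the paper's argument, and your primary route is genuinely different.

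\textbf{What the paper does.} It never handles the continuous dynamics directly. It runs Euler--Maruyama on $[t_{i-1},t_i]$ with $h_N=\Delta_i/N$ and applies the full discrete moving-target recursion (Theorem~\ref{thm:Renyi convergence for ULA with moving targets}), whose discretization term $400dL_i^2\Delta_i^2/N$ vanishes as $N\to\infty$. It then passes to the limit using weak convergence of the scheme and lower semicontinuity of $\mathcal R_2(\cdot\|\pi_{\lambda_{i+1}})$.

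\textbf{Your fallback.} This is the same idea, except that you take the limit one step earlier, in the single-target Lemma~\ref{lem:Renyi convergence for ULA}, and only afterwards apply the weak triangle inequality. That is equally valid, because lower semicontinuity under weak convergence holds at every order $q>1$.

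\textbf{Your primary route.} You differentiate $\Phi$ along the exact diffusion with the hypercontractive order $q(s)$. This is sound. With $D\equiv0$ the computation in fact gives $\partial_s\Phi\le-\frac{4(q-1)}{q^2}I+\frac{q-1}{q^2}I=-\frac{3(q-1)}{q^2}I$. Hence your claim with factor $e^{-\tau/(6C)}$ holds with room to spare. The conversion back uses $\frac{q(t_i)}{q(t_i)-1}\cdot\frac12\le1$, which you left implicit.
\begin{itemize}
\item What it buys: a self-contained, discretization-free argument with a sharper intermediate rate.
\item What it costs: you must justify differentiating $s\mapsto q(s)^{-1}\log\int\rho_s^{q(s)}\,d\pi$ along the flow. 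That means finiteness of $\int\rho_s^{q(s)}\,d\pi$, integration by parts, and vanishing boundary terms. The statement does not assume any of this (contrast Theorem~\ref{thm:smooth-moving-targets}, where such conditions are imposed explicitly).
\item The paper's limit argument sidesteps these regularity questions by inheriting the discrete-time bound. It needs only weak convergence of Euler--Maruyama and lower semicontinuity.
\end{itemize}

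\textbf{One small slip.} Your displayed weak triangle inequality has coefficient $1+\tau/(12C)$ and orders $2/(1-\tau/(12C))$ and $1+12C/\tau$. In the parametrization of Lemma~\ref{lemma:renyi-basic} this corresponds to the interpolation parameter $1-\tau/(12C)$, not $\tau/(12C)$ as you wrote. With that correction, the three elementary facts you list all check out and close the argument.
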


\begin{proof}
Fix $i$, and write $\Delta_i=t_i-t_{i-1}, C_i=C_{\mathrm{LSI}}(\pi_{\lambda_i})$, and let $L_i\ge1$ be a Lipschitz constant of $\nabla V_{\lambda_i}$.

For $N\ge1$, set $h_N=\Delta_i/N$, and let
$\mu_{t_i}^{(N)}$ denote the endpoint law of the Euler--Maruyama
scheme on $[t_{i-1},t_i]$, initialized from
$\mu_{t_{i-1}}$. Since the drift is globally Lipschitz, the standard
convergence of Euler--Maruyama gives
\[
\mu_{t_i}^{(N)}
\Rightarrow
\mu_{t_i}
\qquad\text{as }N\to\infty.
\]

For all sufficiently large $N$,
Theorem~\ref{thm:Renyi convergence for ULA with moving targets}
applies and yields
\[
\begin{aligned}
\mathcal R_2(
\mu_{t_i}^{(N)}\|\pi_{\lambda_{i+1}}
)
\le{}
e^{-\Delta_i/(12C_i)}
\mathcal R_2(
\mu_{t_{i-1}}\|\pi_{\lambda_i}
)
+
\frac{
400dL_i^2\Delta_i^2
}{N}
+
\mathcal R_{24C_i/\Delta_i}
(
\pi_{\lambda_i}\|\pi_{\lambda_{i+1}}
).
\end{aligned}
\]

For fixed $\nu$, the map
$\mu\mapsto\mathcal R_2(\mu\|\nu)$ is lower semicontinuous under weak
convergence. Indeed,
\[
e^{\mathcal R_2(\mu\|\nu)}
=
\sup_{\varphi\in C_b(\mathbb R^d)}
\left\{
2\int\varphi\,d\mu
-
\int\varphi^2\,d\nu
\right\}.
\]
Consequently,
\[
\begin{aligned}
\mathcal R_2(
\mu_{t_i}\|\pi_{\lambda_{i+1}}
)
\le
\liminf_{N\to\infty}
\mathcal R_2(
\mu_{t_i}^{(N)}\|\pi_{\lambda_{i+1}}
)
\le
e^{-\Delta_i/(12C_i)}
\mathcal R_2(
\mu_{t_{i-1}}\|\pi_{\lambda_i}
)
+
\mathcal R_{24C_i/\Delta_i}
(
\pi_{\lambda_i}\|\pi_{\lambda_{i+1}}
),
\end{aligned}
\]
which proves the claim.
\end{proof}

Theorems~\ref{thm:Renyi convergence for ULA with moving targets} and~\ref{thm:Renyi convergence for Langevin diffusion with moving targets}
provide a unified analysis of Langevin dynamics under a moving target framework, where the target distribution evolves along a sequence $\{\pi_{\lambda_i}\}$. The exponential factor reflects the contraction of the R\'enyi divergence under the log-Sobolev inequality (LSI), which ensures that the dynamics rapidly approaches the current target distribution $\pi_{\lambda_i}$. The term
$\mathcal{R}_{\alpha}(\pi_{\lambda_i}\|\pi_{\lambda_{i+1}})$
captures the effect of the change in the target distribution, and quantifies the difficulty of tracking a sequence of distributions rather than a single stationary one.
When consecutive targets are close in R\'enyi divergence, this term remains small, allowing the dynamics to effectively follow the evolving target. In the discrete-time setting, an additional error of order $N_i d L_{\lambda_i}^2 h_i^2$ appears due to time discretization. Overall, these bounds reveal a balance between contraction, target drift, and discretization.
In particular, when the sequence $\{\pi_{\lambda_i}\}$ varies slowly and the step size is chosen sufficiently small, the dynamics can effectively track the moving targets with controlled error.

Compared to the framework in \citet{habring2026forwardklconvergencetimeinhomogeneouslangevin}, our piecewise-constant results offer additional flexibility in several respects. They assume at most quadratic growth of the time derivative of the potential function to control the change of target distributions. They also assume smoothness and dissipativity with uniform parameter. In contrast, our piecewise-constant results do not require a differentiable path or impose a growth condition on $\partial_tV_t$. Instead, the change of targets is directly captured through the R\'enyi divergence term $\mathcal{R}_{\frac{24C_{\mathrm{LSI}}(\pi_{\lambda_i})}{N_ih_i}}(\pi_{\lambda_{i}}\|\pi_{\lambda_{i+1}})$, which is carried throughout the recursion. This allows us to handle more abrupt or irregular target changes, provided that the R\'enyi divergence between consecutive targets can be controlled. Moreover, both the Lipschitz constant $L_{\lambda_i}$ and the LSI constant $C_{\mathrm{LSI}}(\pi_{\lambda_i})$ are allowed to vary with time, making the result applicable to fully time-inhomogeneous settings where the geometry of the target evolves. Additionally, R\'enyi divergence is used structurally through log-Sobolev hypercontractivity and the weak triangle inequality; since $\mathcal R_\alpha$ dominates forward KL divergence for $\alpha>1$, the resulting bounds also imply forward-KL control.

In order to apply Theorem~\ref{thm:Renyi convergence for ULA with moving targets} and Theorem~\ref{thm:Renyi convergence for Langevin diffusion with moving targets}, one needs to bound the R\'enyi divergence between different distributions. We give an upper bound in the following.
\begin{theorem}
\label{thm:Renyi divergence calculation}
Let 
\[
\pi_\lambda(dx) = \frac{e^{-V_\lambda(x)}}{Z_\lambda} dx, \qquad 
\pi_\theta(dx) = \frac{e^{-V_\theta(x)}}{Z_\theta} dx,
\]
and $\delta V(x)=V_\lambda(x)-V_\theta(x)$. 
Assume the following:
\begin{itemize}
    \item $|\delta V(x)| \le \eta\,\phi(x)$ for some $\eta>0$ and a nonnegative function $\phi$.
    \item $\pi_\theta$ satisfies exponential moment conditions: for some $\tau>0$,
    \[
    \mathbb{E}_{\pi_\theta}[e^{\tau\phi}]<\infty,\quad 
    \mathbb{E}_{\pi_\theta}[\phi]<\infty,\quad 
    \mathbb{E}_{\pi_\theta}[\phi^2 e^{\tau\phi}]<\infty.
    \]
    \item Step-size condition: $\alpha\eta \le \tau$ for a given $\alpha\ge 2$.
\end{itemize}
Then the R\'enyi divergence of order $\alpha$ satisfies
\[
\mathcal{R}_\alpha(\pi_\lambda\|\pi_\theta) \le 2\alpha\,C_{\text{var}}\,\eta^2,
\]
where $C_{\text{var}} = e^{\tau\,\mathbb{E}_{\pi_\theta}[\phi]}\,\mathbb{E}_{\pi_\theta}[\phi^2 e^{\tau\phi}]$.
\end{theorem}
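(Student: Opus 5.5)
The plan is to write the R\'enyi divergence entirely in terms of the cumulant generating function of $-\delta V$ under $\pi_\theta$, and then control that function by a second-order Taylor expansion whose remainder is a tilted variance.

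\textbf{Step 1: reduce to a cumulant generating function.} Since $\frac{d\pi_\lambda}{d\pi_\theta}=e^{-\delta V}\,Z_\theta/Z_\lambda$ and $Z_\lambda/Z_\theta=\mathbb{E}_{\pi_\theta}[e^{-\delta V}]$, we have
\[
\int\Bigl(\frac{d\pi_\lambda}{d\pi_\theta}\Bigr)^\alpha d\pi_\theta=\frac{\mathbb{E}_{\pi_\theta}[e^{-\alpha\delta V}]}{\bigl(\mathbb{E}_{\pi_\theta}[e^{-\delta V}]\bigr)^\alpha}.
\]
Define $\Lambda(s)=\log\mathbb{E}_{\pi_\theta}[e^{-s\,\delta V}]$ for $s\in[0,\alpha]$. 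This gives
\[
\mathcal R_\alpha(\pi_\lambda\|\pi_\theta)=\frac{\Lambda(\alpha)-\alpha\Lambda(1)}{\alpha-1}.
\]
Finiteness of $\Lambda$ on $[0,\alpha]$ follows from $|\delta V|\le\eta\phi$ together with $s\eta\le\alpha\eta\le\tau$ and $\mathbb{E}_{\pi_\theta}[e^{\tau\phi}]<\infty$. Differentiating twice under the integral sign is justified by dominated convergence with dominating function $(1+\phi+\phi^2)e^{\tau\phi}$, which is integrable by the stated moment assumptions (using $\phi\le 1+\phi^2$).

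\textbf{Step 2: Taylor expansion.} Note that $\Lambda(0)=0$. Write $\Lambda(s)=s\Lambda'(0)+\int_0^s(s-u)\Lambda''(u)\,du$. The linear terms cancel in $\Lambda(\alpha)-\alpha\Lambda(1)$, leaving
\[
\Lambda(\alpha)-\alpha\Lambda(1)=\int_0^\alpha(\alpha-u)\Lambda''(u)\,du-\alpha\int_0^1(1-u)\Lambda''(u)\,du .
\]
Since $\Lambda''\ge0$ ($\Lambda$ is convex), the second integral can be dropped. The first is at most $\frac{\alpha^2}{2}\sup_{u\in[0,\alpha]}\Lambda''(u)$. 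Using $\alpha-1\ge\alpha/2$ for $\alpha\ge2$, we get
\[
\mathcal R_\alpha\le\alpha\sup_{u\in[0,\alpha]}\Lambda''(u).
\]

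\textbf{Step 3: bound the tilted variance (the main point).} $\Lambda''(u)$ is the variance of $\delta V$ under the tilted measure proportional to $e^{-u\delta V}\pi_\theta$. Bound it by the second moment:
\[
\Lambda''(u)\le\frac{\mathbb{E}_{\pi_\theta}[\delta V^2e^{-u\delta V}]}{\mathbb{E}_{\pi_\theta}[e^{-u\delta V}]}.
\]
For the numerator, $|\delta V|\le\eta\phi$ and $u\eta\le\tau$ give the bound $\eta^2\,\mathbb{E}_{\pi_\theta}[\phi^2e^{\tau\phi}]$. For the denominator, Jensen gives
\[
\mathbb{E}_{\pi_\theta}[e^{-u\delta V}]\ge e^{-u\,\mathbb{E}_{\pi_\theta}[\delta V]}\ge e^{-u\eta\,\mathbb{E}_{\pi_\theta}[\phi]}\ge e^{-\tau\,\mathbb{E}_{\pi_\theta}[\phi]}.
\]
Hence $\Lambda''(u)\le C_{\text{var}}\,\eta^2$ uniformly on $[0,\alpha]$. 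The only delicate point is that the tilted normalization can be small; the lower bound via Jensen and $\mathbb{E}_{\pi_\theta}[\phi]<\infty$ is exactly what produces the factor $e^{\tau\mathbb{E}_{\pi_\theta}[\phi]}$ in $C_{\text{var}}$.

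\textbf{Conclusion.} Combining the steps gives $\mathcal R_\alpha(\pi_\lambda\|\pi_\theta)\le\alpha C_{\text{var}}\eta^2\le 2\alpha C_{\text{var}}\eta^2$, which proves the claim with a factor of $2$ to spare.
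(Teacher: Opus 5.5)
Your proof is correct and follows the paper's argument: the same identity $\mathcal R_\alpha=\frac{\Lambda(\alpha)-\alpha\Lambda(1)}{\alpha-1}$ in terms of the cumulant generating function, a second-order Taylor expansion around $0$, and the same tilted-variance bound $\Lambda''\le C_{\text{var}}\eta^2$, obtained from the second moment and the Jensen lower bound on the normalizer. The differences are minor. You use the integral remainder and drop the term $-\alpha\int_0^1(1-u)\Lambda''(u)\,du$ by convexity, which gives the sharper constant $\alpha C_{\text{var}}\eta^2$; the paper instead bounds $-\phi''(\xi_2)$ by $+C_{\text{var}}\eta^2$ and gets $\frac{\alpha(\alpha+1)}{2(\alpha-1)}C_{\text{var}}\eta^2$. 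You also justify differentiating under the integral sign explicitly, which the paper leaves implicit.
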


The proof of Theorem~\ref{thm:Renyi divergence calculation} can be found in Appendix \ref{appendix:Renyi divergence calculation}.  Note that the bound on the R\'enyi divergence depends explicitly on the difference $\delta V = V_\lambda - V_\theta$ between the two potential functions. The condition $|\delta V(x)| \le \eta \phi(x)$ ensures that the discrepancy between $V_\lambda$ and $V_\theta$ is controlled in a weighted sense by the function $\phi$, so that the R\'enyi divergence becomes small whenever the two potentials are close relative to this weight. The exponential moment assumption on $\pi_\theta$ is essential to ensure that the R\'enyi divergence is finite and that the above bound is well-defined.
The bound further reveals a quadratic dependence on the perturbation scale $\eta$, namely $\mathcal{R}_\alpha(\pi_\lambda\|\pi_\theta) =O(\alpha\eta^2)$, which reflects a second-order stability of the distribution with respect to perturbations of the potential. In many applications, the bound $|\delta V(x)| \le \eta\phi(x)$ naturally decomposes into a parameter-dependent magnitude $\eta = \eta(\lambda,\theta)$ and a fixed function $\phi$ that captures the growth in $x$. For instance, $\eta(\lambda,\theta)$ may scale proportionally to a distance between parameters (e.g., $|\lambda-\theta|$). In such cases, the above result implies that $\mathcal{R}_\alpha(\pi_\lambda\|\pi_\theta) $ is controlled by the square of the parameter difference, yielding a bound of order $O(\alpha|\lambda-\theta|^2)$ up to a constant depending only on $\pi_\theta$ through $C_{\mathrm{var}}$. 

\subsection{Example: Geometric tempering and annealed Langevin Monte Carlo}

Tempering and annealing methods introduce a path of intermediate distributions between a tractable proposal and the target distribution, and have been widely used in Monte Carlo sampling \citep{neal1998annealedimportancesampling,gelman1998simulating}. In Langevin-based sampling, this leads to moving-target dynamics in which the drift is updated along the path; such ideas appear in annealed Langevin dynamics and recent non-asymptotic analyses of annealed LMC \citep{song2019generative}. We consider the standard geometric tempering path $\pi_\beta\propto \nu^{1-\beta}\pi^\beta$, whose score is available in closed form and which has recently been analyzed for Langevin dynamics in KL divergence \citep{chehab2025provableconvergencelimitationsgeometric}. We show that this classical construction can also be analyzed directly by our Rényi moving-target framework.

Specifically, let
\[
\nu(dx)\propto e^{-U_0(x)}\,dx
\]
be a proposal distribution and let
\[
\pi(dx)\propto e^{-U_1(x)}\,dx
\]
be the target distribution. For \(\beta\in[0,1]\), consider the standard geometric tempering path
\[
\pi_\beta(dx)
=
\frac{1}{Z_\beta}
\exp\bigl( -(1-\beta)U_0(x)-\beta U_1(x) \bigr)\,dx .
\]
Thus \(\pi_0=\nu\) and \(\pi_1=\pi\). The corresponding potential is
\[
V_\beta(x)=(1-\beta)U_0(x)+\beta U_1(x),
\]
and its gradient is
\[
\nabla V_\beta(x)
=
(1-\beta)\nabla U_0(x)+\beta\nabla U_1(x).
\]

We apply the moving-target LMC scheme to this path in a piecewise-constant manner. Let
\[
0=\beta_0<\beta_1<\cdots<\beta_M=1
\]
be a tempering schedule. At stage \(i\), the target is fixed as \(\pi_{\beta_i}\), and the algorithm runs \(N_i\) ULA steps with step size \(h_i\). More precisely, starting from the current state \(X_{i,0}\), we update, for \(k=0,\ldots,N_i-1\),
\[
X_{i,k+1}
=
X_{i,k}
-h_i\bigl((1-\beta_i)\nabla U_0(X_{i,k})+\beta_i\nabla U_1(X_{i,k})\bigr)
+\sqrt{2h_i}\,\xi_{i,k},
\qquad
\xi_{i,k}\sim N(0,I_d),
\]
and then set \(X_{i+1,0}=X_{i,N_i}\). This is exactly the moving-target LMC algorithm studied above with \(\lambda_i=\beta_i\) and \(V_{\lambda_i}=V_{\beta_i}\).

The next result gives a Rényi-divergence tracking bound for this scheme. The assumption below has three components. First, the intermediate distributions are required to satisfy log-Sobolev inequalities along the path. This is natural in tempering analyses, since the intermediate distributions may have worse functional inequalities than either endpoint. Second, the potentials along the path are assumed to be smooth enough for the ULA discretization error to be controlled. Third, the energy gap \(U_1-U_0\) is required to have uniform exponential moments along the path. This last assumption allows us to control the Rényi divergence between consecutive tempered targets.

\begin{corollary}[Rényi tracking for geometrically tempered LMC]
Let \(\nu(dx)\propto e^{-U_0(x)}\,dx\) be a proposal distribution and let \(\pi(dx)\propto e^{-U_1(x)}\,dx\) be the target distribution. For \(\beta\in[0,1]\), define the geometric tempering path
\[
\pi_\beta(dx)=\frac{1}{Z_\beta}\exp\bigl( -(1-\beta)U_0(x)-\beta U_1(x) \bigr)\,dx .
\]
Let \(0=\beta_0<\beta_1<\cdots<\beta_M=1\). At stage \(i\), run \(N_i\) steps of ULA targeting \(\pi_{\beta_i}\) with step size \(h_i\), and write $\Delta t_i=N_i h_i$.

Let \(\mu_i\) denote the law of the chain at the beginning of stage \(i\), and let \(\mu_{i+1}\) denote the law after this stage.

Assume that, for each \(i\), \(\pi_{\beta_i}\) satisfies a log-Sobolev inequality with a constant $C_i\ge 1$, and \(V_{\beta_i}\) has \(L_i\)-Lipschitz gradient with \(L_i\ge 1\). Assume
\[
h_i\le \frac{1}{10000\, C_i L_i^2},
\qquad
\Delta t_i\le 2C_i\log 2 .
\]
Let
\[
\varphi(x)=|U_1(x)-U_0(x)|.
\]
Suppose that there exist \(\tau>0\) such that, 
\[
C_\Delta := \sup_{\beta \in [0,1]} e^{\tau \mathbb E_{\pi_\beta}[\varphi]} \, \mathbb E_{\pi_\beta}[\varphi^2 e^{\tau\varphi}] < \infty.
\]
If $\frac{24C_i}{\Delta t_i}(\beta_{i+1}-\beta_i)\le \tau$, then
\[
\mathcal R_2(\mu_{i+1}\|\pi_{\beta_{i+1}})
\le
e^{-\Delta t_i/(12C_i)}\mathcal R_2(\mu_i\|\pi_{\beta_i})
+400N_i d L_i^2 h_i^2
+\frac{48C_iC_\Delta}{\Delta t_i}(\beta_{i+1}-\beta_i)^2 .
\]
Consequently,
\[
\mathcal R_2(\mu_M\|\pi)
\le
\Bigl(\prod_{i=0}^{M-1}a_i\Bigr)\mathcal R_2(\mu_0\|\nu)
+
\sum_{i=0}^{M-1}
\Bigl(\prod_{j=i+1}^{M-1}a_j\Bigr)
\left[
400N_i d L_i^2 h_i^2
+\frac{48C_iC_\Delta}{\Delta t_i}(\beta_{i+1}-\beta_i)^2
\right],
\]
where
\[
a_i=\exp\bigl(-\Delta t_i/(12C_i)\bigr).
\]
\end{corollary}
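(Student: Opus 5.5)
The plan is to combine Theorem~\ref{thm:Renyi convergence for ULA with moving targets} with Theorem~\ref{thm:Renyi divergence calculation}, and then unroll the resulting one-step recursion.

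\textbf{Step 1 (one-stage recursion).} Under the stated hypotheses ($C_i\ge1$, $L_i\ge1$, $h_i\le 1/(10000C_iL_i^2)$, $\Delta t_i\le 2C_i\log 2$), stage $i$ is exactly the setting of Theorem~\ref{thm:Renyi convergence for ULA with moving targets} with $\lambda_i=\beta_i$. That theorem directly gives
\[
\mathcal R_2(\mu_{i+1}\|\pi_{\beta_{i+1}})\le e^{-\Delta t_i/(12C_i)}\mathcal R_2(\mu_i\|\pi_{\beta_i})+400N_idL_i^2h_i^2+\mathcal R_{\alpha_i}(\pi_{\beta_i}\|\pi_{\beta_{i+1}}),
\]
where $\alpha_i=24C_i/\Delta t_i$. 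The bound $\Delta t_i\le 2C_i\log 2$ gives $\alpha_i\ge 12/\log2\ge 2$, which is the range required by Theorem~\ref{thm:Renyi divergence calculation}. The indexing convention (law at the start of stage $i$ versus after it) is just a relabeling of $\mu_{t_{i-1}},\mu_{t_i}$.

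\textbf{Step 2 (consecutive-target term).} We apply Theorem~\ref{thm:Renyi divergence calculation} with $\lambda=\beta_i$, $\theta=\beta_{i+1}$ and $\alpha=\alpha_i$. Since
\[
\delta V=V_{\beta_i}-V_{\beta_{i+1}}=(\beta_{i+1}-\beta_i)(U_1-U_0),
\]
we have $|\delta V|\le \eta\varphi$ with $\eta=\beta_{i+1}-\beta_i$ and $\phi=\varphi$. The step-size condition $\alpha\eta\le\tau$ is exactly the assumption $\frac{24C_i}{\Delta t_i}(\beta_{i+1}-\beta_i)\le\tau$.

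The moment conditions under $\pi_{\beta_{i+1}}$ follow from $C_\Delta<\infty$:
\begin{itemize}
\item finiteness of $\mathbb E[\varphi^2e^{\tau\varphi}]$ forces the other two moments to be finite, since $\varphi\le 1+\varphi^2$ and $e^{\tau\varphi}\le e^{\tau}+\varphi^2e^{\tau\varphi}$ (split at $\varphi\le1$);
\item the constant $C_{\mathrm{var}}$ for $\pi_{\beta_{i+1}}$ is bounded by the supremum defining $C_\Delta$.
\end{itemize}
Hence
\[
\mathcal R_{\alpha_i}(\pi_{\beta_i}\|\pi_{\beta_{i+1}})\le 2\alpha_iC_\Delta(\beta_{i+1}-\beta_i)^2=\frac{48C_iC_\Delta}{\Delta t_i}(\beta_{i+1}-\beta_i)^2 .
\]
Substituting this into Step~1 gives the first displayed inequality. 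This step is the only place where anything beyond direct citation is needed; the main care point is confirming that the moment assumptions are required under $\pi_\theta=\pi_{\beta_{i+1}}$ (the later target), which the uniform supremum over $\beta\in[0,1]$ covers.

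\textbf{Step 3 (unrolling).} Write $r_i=\mathcal R_2(\mu_i\|\pi_{\beta_i})$ and let $b_i$ be the bracketed error term. The recursion reads $r_{i+1}\le a_ir_i+b_i$. A straightforward induction on $M$ then gives
\[
r_M\le\Bigl(\prod_{i=0}^{M-1}a_i\Bigr)r_0+\sum_{i=0}^{M-1}\Bigl(\prod_{j=i+1}^{M-1}a_j\Bigr)b_i .
\]
Using $\pi_{\beta_0}=\nu$ and $\pi_{\beta_M}=\pi$, this is the stated final bound. The induction is valid because each $a_i\in(0,1]$ is nonnegative, so multiplying inequalities by $a_i$ preserves them. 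Empty products are read as $1$.
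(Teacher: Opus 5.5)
Your proposal is correct and follows the paper's proof: bound $\mathcal R_{\alpha_i}(\pi_{\beta_i}\|\pi_{\beta_{i+1}})$ via Theorem~\ref{thm:Renyi divergence calculation} with $\eta_i=\beta_{i+1}-\beta_i$ and $\alpha_i=24C_i/\Delta t_i$, substitute into Theorem~\ref{thm:Renyi convergence for ULA with moving targets}, and unroll; your checks that $\alpha_i\ge 2$ and that the moment conditions follow from $C_\Delta<\infty$ fill in details the paper leaves implicit. One harmless slip: the correct identity is $V_{\beta_i}-V_{\beta_{i+1}}=(\beta_{i+1}-\beta_i)(U_0-U_1)$, not $(\beta_{i+1}-\beta_i)(U_1-U_0)$, but only $|\delta V|$ enters the argument.
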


\begin{proof}
For the geometric tempering path, the difference between two consecutive potentials is
\[
V_{\beta_i}(x)-V_{\beta_{i+1}}(x)
=
(\beta_i-\beta_{i+1})(U_1(x)-U_0(x)).
\]
Therefore,
\[
|V_{\beta_i}(x)-V_{\beta_{i+1}}(x)|
\le
(\beta_{i+1}-\beta_i)\varphi(x).
\]
Applying the perturbation bound for Rényi divergence with
\[
\eta_i=\beta_{i+1}-\beta_i,
\qquad
\alpha_i=\frac{24C_i}{\Delta t_i},
\]
we obtain, under the condition \(\alpha_i\eta_i\le \tau\),
\[
\mathcal R_{\alpha_i}(\pi_{\beta_i}\|\pi_{\beta_{i+1}})
\le
2\alpha_i C_\Delta(\beta_{i+1}-\beta_i)^2
=
\frac{48C_iC_\Delta}{\Delta t_i}(\beta_{i+1}-\beta_i)^2 .
\]
The one-step recursion then follows directly from the moving-target LMC bound, since during stage \(i\) the algorithm is exactly ULA targeting \(\pi_{\beta_i}\). This gives
\[
\mathcal R_2(\mu_{i+1}\|\pi_{\beta_{i+1}})
\le
e^{-\Delta t_i/(12C_i)}\mathcal R_2(\mu_i\|\pi_{\beta_i})
+400N_i d L_i^2 h_i^2
+\mathcal R_{\alpha_i}(\pi_{\beta_i}\|\pi_{\beta_{i+1}}).
\]
Substituting the previous estimate for the last term proves the first claim. The global bound follows by iterating the recursion over \(i=0,\ldots,M-1\), using \(\pi_{\beta_0}=\nu\) and \(\pi_{\beta_M}=\pi\).
\end{proof}

\begin{remark}[Uniform schedule]
Suppose that $C_i\le C$ and $L_i\le L$ for all $i$. Since each $\pi_{\beta_i}$ also satisfies an LSI with the common constant $C$, we apply the preceding corollary using the constants $C$ and $L$. Choose
\[
\beta_i=\frac{i}{M},
\qquad
\Delta t_i=\omega C,
\qquad
0<\omega\le 2\log 2,
\]
then \(a_i=e^{-\omega/12}\). The target-drift condition becomes
\[
M\ge \frac{24}{\omega\tau}.
\]
If a common step size \(h\le 1/(10000CL^2)\) is used, then
\[
400N_i dL_i^2h^2
=
400dL_i^2 h\Delta t_i
\le
400\omega C dL^2h .
\]
Consequently, if \(\mu_0=\nu\), then
\[
\mathcal R_2(\mu_M\|\pi)
\le
\frac{
400\omega C dL^2h
+
48C_\Delta/(\omega M^2)
}{
1-e^{-\omega/12}
}.
\]
This bound makes explicit the tradeoff between discretization and tempering error: the first term is controlled by the ULA step size, whereas the second term is controlled by the mesh size of the tempering schedule.
\end{remark}

\section{LMC with successive Moreau envelopes}

In this section, we apply our framework to LMC with successive Moreau envelopes introduced in \citet{habring2025diffusionabsolutezerolangevin}. 

We consider the problem of sampling from the target distribution $\pi(x)\propto e^{-V(x)}$. 
Assume that the potential $V:\mathbb{R}^d\to\mathbb{R}$ can be decomposed as $V(x)=f(x)+g(x)$ with the following assumptions:
\begin{itemize}
    \item $f$ is differentiable and $\nabla f$ is $L_f$-Lipschitz:
    \[
    \|\nabla f(x)-\nabla f(y)\|\le L_f\|x-y\|,\quad \forall x,y\in\mathbb{R}^d.
    \]
    \item $g:\mathbb R^d\to\mathbb R$ is convex and may be non-differentiable.
\end{itemize}
A key difficulty in this setting arises from the fact that the nonsmooth component $g$ is not gradient-Lipschitz and may even be non-differentiable. The main idea of using successive Moreau envelopes is to replace the nonsmooth potential $V$ with a smooth approximation. To be precise, let $g_\lambda$ be the Moreau envelope of $g$ (see Section~\ref{para:Moreau} for its definition and properties). We consider regularized potential $V_\lambda(x):=f(x)+g_\lambda(x)$ and corresponding distribution $\pi_\lambda(x)\propto e^{-V_\lambda(x)}$. Then $V_\lambda(x)$ is differentiable with gradient $\nabla V_\lambda(x)=\nabla f(x)+\frac{1}{\lambda}\bigl(x-\operatorname{prox}_{\lambda g}(x)\bigr)$ and $\nabla V_\lambda(x)$ is $L_{V_\lambda}$-Lipschitz continuous with $L_{V_\lambda}\le L_f+\frac{1}{\lambda}$. Consequently, the original nonsmooth potential $V$ can be approximated by the smooth surrogate $V_\lambda$, which enables the use of gradient-based analysis and sampling algorithms.
For convenience, we set
\[
g_0=g,
\qquad
V_0=V,
\qquad
\pi_0=\pi.
\]
All Langevin updates below use strictly positive Moreau parameters.
The endpoint $\lambda=0$ is used only to measure the approximation
error with respect to the original target.

\subsection{R\'enyi divergence between the $\pi_\lambda$}

In order to apply our framework in the previous section, we need to bound the R\'enyi divergence between different $\pi_\lambda$. By Theorem~\ref{thm:Renyi divergence calculation}, we need first to bound the difference between different $V_\lambda(x)$. We analyze this through the explicit expression of $\frac{\partial g_\lambda}{\partial\lambda}(x)$ (see Section~\ref{para:Moreau}).

Generally, $\frac{\partial g_\lambda}{\partial\lambda}(x)$ is bounded by $O(\frac{1}{\lambda})$ and we have the following bound.

\begin{lemma}
    If $g$ is convex with $\inf g>-\infty$, then $|g_{\lambda_1}(x)-g_{\lambda_2}(x)|\le 2\bigl(g(x)-\inf g\bigr)\left|\ln\frac{\lambda_1}{\lambda_2}\right|$ for any $\lambda_1,\lambda_2>0$.
\end{lemma}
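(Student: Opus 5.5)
Without loss of generality take $\lambda_1<\lambda_2$. Since $\lambda\mapsto g_\lambda(x)$ is nonincreasing, I would write the difference as an integral of its $\lambda$-derivative, using the formula recalled in Section~\ref{para:Moreau}:
\[
0\le g_{\lambda_1}(x)-g_{\lambda_2}(x)=\int_{\lambda_1}^{\lambda_2}\frac{1}{2\lambda^2}\|x-\operatorname{prox}_{\lambda g}(x)\|^2\,d\lambda .
\]
This turns the claim into a pointwise bound on the integrand. Concretely, it suffices to show
\[
\frac{1}{2\lambda^2}\|x-\operatorname{prox}_{\lambda g}(x)\|^2\le \frac{2(g(x)-\inf g)}{\lambda}.
\]
Integrating $1/\lambda$ over $[\lambda_1,\lambda_2]$ then produces exactly the factor $2\,(g(x)-\inf g)\,|\ln(\lambda_1/\lambda_2)|$.

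For the pointwise bound, set $p=\operatorname{prox}_{\lambda g}(x)$. Comparing the minimization objective at $y=p$ and at $y=x$ gives
\[
g(p)+\frac{1}{2\lambda}\|x-p\|^2\le g(x).
\]
Hence
\[
\frac{1}{2\lambda}\|x-p\|^2\le g(x)-g(p)\le g(x)-\inf g .
\]
Dividing by $\lambda$ yields
\[
\frac{1}{2\lambda^2}\|x-p\|^2\le \frac{g(x)-\inf g}{\lambda},
\]
which is already a factor $2$ better than needed. So the stated constant $2$ has slack.

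The only real obstacle is justifying the integral representation. This needs $\lambda\mapsto g_\lambda(x)$ to be absolutely continuous on $[\lambda_1,\lambda_2]$, with the derivative formula holding almost everywhere. I would handle this directly. The map $\lambda\mapsto g_\lambda(x)$ is an infimum of functions affine in $1/\lambda$, so it is concave in $\mu=1/\lambda$, hence locally Lipschitz on $(0,\infty)$ and therefore absolutely continuous. Its derivative, wherever it exists, is given by the cited formula (the Lemma~4.18 of \citet{habring2025diffusionabsolutezerolangevin} recalled in Section~\ref{para:Moreau}).

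Alternatively, one can avoid derivatives entirely with a Danskin-type difference quotient. Evaluating the objective defining $g_{\lambda_2}$ at the point $p_1=\operatorname{prox}_{\lambda_1 g}(x)$ gives
\[
g_{\lambda_1}(x)-g_{\lambda_2}(x)\le\Bigl(\frac{1}{2\lambda_1}-\frac{1}{2\lambda_2}\Bigr)\|x-p_1\|^2 .
\]
Bounding $\|x-p_1\|^2\le 2\lambda_1(g(x)-\inf g)$ from the previous step and summing over a fine partition recovers the logarithm in the limit.
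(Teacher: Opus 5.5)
Your main argument is correct and is essentially the paper's proof. The paper likewise compares the proximal objective at $\operatorname{prox}_{\lambda g}(x)$ and at $x$ to get $\bigl|\partial_\lambda g_\lambda(x)\bigr|\le (g(x)-\inf g)/\lambda$, then integrates in $\lambda$, so the constant $2$ has slack there as well. Your concavity-in-$1/\lambda$ argument for absolute continuity of $\lambda\mapsto g_\lambda(x)$ fills in a step the paper leaves implicit.

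One correction to your ``alternatively'' paragraph: the displayed inequality points the wrong way. You have $g_{\lambda_1}(x)=g(p_1)+\frac{1}{2\lambda_1}\|x-p_1\|^2$ exactly, while $g_{\lambda_2}(x)\le g(p_1)+\frac{1}{2\lambda_2}\|x-p_1\|^2$. So inserting $p_1$ into the $\lambda_2$-objective yields a \emph{lower} bound on $g_{\lambda_1}(x)-g_{\lambda_2}(x)$, not an upper bound. For instance, with $g(y)=\frac{1}{2}y^2$ on $\mathbb{R}$, $x=1$, $\lambda_1=1$, $\lambda_2=2$, the difference is $\frac{1}{12}$ while your right-hand side is $\frac{1}{16}$. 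For an upper bound, insert $p_2=\operatorname{prox}_{\lambda_2 g}(x)$ into the $\lambda_1$-objective:
\[
g_{\lambda_1}(x)-g_{\lambda_2}(x)\le\Bigl(\frac{1}{2\lambda_1}-\frac{1}{2\lambda_2}\Bigr)\|x-p_2\|^2\le\frac{\lambda_2-\lambda_1}{\lambda_1}\bigl(g(x)-\inf g\bigr).
\]
Summing this over a fine partition of $[\lambda_1,\lambda_2]$ gives left Riemann sums of $\int_{\lambda_1}^{\lambda_2}d\lambda/\lambda$, which recovers the logarithm. Since your derivative-based route stands on its own, this slip does not affect the validity of the proof.
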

\begin{proof}

From the optimality condition, $\frac{1}{2\lambda}\|x-\operatorname{prox}_{\lambda g}(x)\|^2+g(\operatorname{prox}_{\lambda g}(x))\le g(x)$. Thus, 
\[
\frac{1}{2\lambda}\|x-\operatorname{prox}_{\lambda g}(x)\|^2\le g(x)-g(\operatorname{prox}_{\lambda g}(x))\le g(x)-\inf g.
\]
Therefore
\[
\left|\frac{\partial g_\lambda}{\partial\lambda}(x)\right|
=\frac{1}{2\lambda^2}\|x-\operatorname{prox}_{\lambda g}(x)\|^2
\le\frac{g(x)-\inf g}{\lambda}.
\]
Integrating the above expression with respect to $\lambda$ yields the conclusion.
\end{proof}

It's natural to ask whether $\frac{\partial g_\lambda}{\partial\lambda}(x)$ have an upper bound which doesn't depend on $\lambda$. We give a condition in the following. 

\begin{lemma}
\label{lem:moreau-lambda-stability}
If $g$ is convex with $\inf g>-\infty$. Let $\Phi:[0,\infty)\to[1,\infty)$ be a nondecreasing function. If there exists $C>0$ such that for any $x\in\mathbb{R}^d$ and $h\in\partial g(x)$,
\[
\|h\|^2\le C\Phi(g(x)-\inf g),
\]
then for any $\lambda_1,\lambda_2>0$,
\[
|g_{\lambda_1}(x)-g_{\lambda_2}(x)|
\le
\frac{C}{2}\Phi(g(x)-\inf g)|\lambda_1-\lambda_2|.
\]
\end{lemma}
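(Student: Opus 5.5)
The approach parallels the proof of the preceding lemma: bound $\partial g_\lambda/\partial\lambda$ pointwise, now uniformly in $\lambda$, and integrate in $\lambda$. By the formula recalled in Section~\ref{para:Moreau},
\[
\left|\frac{\partial g_\lambda}{\partial\lambda}(x)\right|=\frac{1}{2\lambda^2}\|x-\operatorname{prox}_{\lambda g}(x)\|^2=\frac12\|\nabla g_\lambda(x)\|^2 .
\]
So everything reduces to bounding $\|\nabla g_\lambda(x)\|^2\le C\Phi(g(x)-\inf g)$ uniformly in $\lambda>0$. Once this holds, integrating over $\lambda$ between $\lambda_1$ and $\lambda_2$ gives
\[
|g_{\lambda_1}(x)-g_{\lambda_2}(x)|\le \frac{C}{2}\Phi(g(x)-\inf g)\,|\lambda_1-\lambda_2|.
\]
This step uses that $\lambda\mapsto g_\lambda(x)$ is $C^1$ on $(0,\infty)$ and hence absolutely continuous on the segment.

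To bound the gradient, write $p=\operatorname{prox}_{\lambda g}(x)$. The optimality condition of the prox gives $\nabla g_\lambda(x)=(x-p)/\lambda\in\partial g(p)$. The hypothesis, applied at the point $p$, then yields
\[
\|\nabla g_\lambda(x)\|^2\le C\,\Phi(g(p)-\inf g).
\]
We next need $g(p)\le g(x)$. This follows from the inequality used in the previous lemma:
\[
g(p)+\frac{1}{2\lambda}\|x-p\|^2\le g(x),
\]
obtained by comparing the prox objective at $p$ with its value at $y=x$. Since $\Phi$ is nondecreasing, $\Phi(g(p)-\inf g)\le\Phi(g(x)-\inf g)$, which is exactly the required uniform bound.

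The only delicate point, and it is a mild one, is the transfer of the subgradient condition from $x$ to the prox point $p$. The hypothesis is stated at points where subgradients exist, and since $g$ is a finite convex function, $\partial g(p)\neq\emptyset$ everywhere, so it applies at $p$. The monotonicity of $\Phi$ together with $g(p)\le g(x)$ is what removes the dependence on $p$. The differentiability in $\lambda$ is taken from the cited formula, so no further regularity argument should be needed.
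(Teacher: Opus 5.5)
Your proposal is correct and follows essentially the same route as the paper's proof. Both arguments use the prox optimality condition $(x-p)/\lambda\in\partial g(p)$, apply the hypothesis at $p$, use $g(p)\le g(x)$ together with the monotonicity of $\Phi$ to get a $\lambda$-uniform bound on $|\partial g_\lambda/\partial\lambda|$, and then integrate in $\lambda$. Your remark that $\partial g(p)\neq\emptyset$ is superfluous, since the optimality condition already supplies an explicit element of $\partial g(p)$.
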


\begin{proof}
From the optimality condition,
\[
0\in\partial g(\operatorname{prox}_{\lambda g}(x))
+\frac{1}{\lambda}(\operatorname{prox}_{\lambda g}(x)-x).
\]
Equivalently,
\[
\frac{1}{\lambda}(x-\operatorname{prox}_{\lambda g}(x))
\in
\partial g(\operatorname{prox}_{\lambda g}(x)).
\]
By the assumption,
\[
\|
\frac{1}{\lambda}(x-\operatorname{prox}_{\lambda g}(x))
\|^2
\le
C\Phi(g(\operatorname{prox}_{\lambda g}(x))-\inf g).
\]
Moreover, by the definition of the proximal mapping,
\[
g(\operatorname{prox}_{\lambda g}(x))
+
\frac{1}{2\lambda}
\|x-\operatorname{prox}_{\lambda g}(x)\|^2
\le
g(x),
\]
and hence
\[
g(\operatorname{prox}_{\lambda g}(x))\le g(x).
\]
Since $\Phi$ is nondecreasing, we obtain
\[
\|
\frac{1}{\lambda}(x-\operatorname{prox}_{\lambda g}(x))
\|^2
\le
C\Phi(g(x)-\inf g).
\]
Therefore,
\[
\left|\frac{\partial g_\lambda}{\partial\lambda}(x)\right|
=
\frac{1}{2\lambda^2}
\|x-\operatorname{prox}_{\lambda g}(x)\|^2
\le
\frac{C}{2}\Phi(g(x)-\inf g).
\]
Integrating the above expression with respect to $\lambda$ yields the conclusion.
\end{proof}

As an illustration, we discuss the case where the conditions of the above lemma are satisfied. To bound the R\'enyi divergence between $\pi_{\lambda_1}$ and $\pi_{\lambda_2}$, we can choose $\eta=\left|\lambda_1-\lambda_2\right|$ and $\phi=\frac{1}{2}C\Phi(g(x)-\inf g)$ in Theorem~\ref{thm:Renyi divergence calculation}. If the corresponding exponential moment conditions in Theorem~\ref{thm:Renyi divergence calculation} hold under \(\pi_{\lambda_2}\), then
\[
\mathcal R_\alpha(\pi_{\lambda_1}\|\pi_{\lambda_2})
\le
C' \alpha |\lambda_1-\lambda_2|^2
\]
whenever \(\alpha|\lambda_1-\lambda_2|\) is sufficiently small. In some applications, these exponential moment bounds can be chosen uniformly for \(\lambda\) in a bounded interval. In this case, the above estimate gives a uniform quadratic stability bound for the Moreau-regularized targets.

\begin{lemma}[Extension to the endpoint]
\label{lem:renyi-stability-endpoint}
Suppose that there exist $C_{\text{R\'enyi}}>0$ and $\tau>0$ such
that
\[
\mathcal R_\alpha(\pi_\lambda\|\pi_\theta)
\le
\alpha C_{\text{R\'enyi}}(\lambda-\theta)^2
\]
for all $0<\lambda,\theta\le\lambda_{\rm max}, \alpha\ge2, \alpha|\lambda-\theta|\le\tau$. Then, for every $0<\lambda\le\lambda_{\rm max}$ and $\alpha\ge2$ satisfying $\alpha\lambda\le\tau$,
\[
\mathcal R_\alpha(\pi_\lambda\|\pi_0)
\le
\alpha C_{\text{R\'enyi}}\lambda^2.
\]
\end{lemma}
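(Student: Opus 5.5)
Because the hypothesis is only available for strictly positive parameters, the plan is a limiting argument: send $\theta\downarrow0$ in the assumed bound and pass to the limit in the second argument of the divergence. Fix $0<\lambda\le\lambda_{\rm max}$ and $\alpha\ge2$ with $\alpha\lambda\le\tau$. For any $\theta\in(0,\lambda)$ we have $\alpha|\lambda-\theta|=\alpha(\lambda-\theta)<\alpha\lambda\le\tau$, so the hypothesis gives
\[
\mathcal R_\alpha(\pi_\lambda\|\pi_\theta)\le \alpha C_{\text{R\'enyi}}(\lambda-\theta)^2\le \alpha C_{\text{R\'enyi}}\lambda^2 .
\]
It therefore suffices to show
\[
\mathcal R_\alpha(\pi_\lambda\|\pi_0)\le\liminf_{\theta\downarrow0}\mathcal R_\alpha(\pi_\lambda\|\pi_\theta).
\]

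Write $\mathcal R_\alpha(\mu\|\nu)=\frac{1}{\alpha-1}\log\int p^\alpha q^{1-\alpha}\,dx$, where $p,q$ are the Lebesgue densities of $\mu,\nu$. Here $p=e^{-V_\lambda}/Z_\lambda$ is fixed, and the density of $\pi_\theta$ is $q_\theta=e^{-V_\theta}/Z_\theta$.

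\textbf{Pointwise convergence.} As $\theta\downarrow0$, $g_\theta(x)\uparrow g(x)$ for every $x$; this is standard for finite convex $g$, because $g_\theta$ is nonincreasing in $\theta$ and bounded by $g$. Hence $V_\theta\uparrow V$ pointwise and $e^{-V_\theta}\downarrow e^{-V}$. Since $e^{-V_\theta}\le e^{-V_{\theta_0}}$ for $\theta\le\theta_0$ and $\pi_{\theta_0}$ is normalizable, dominated convergence gives $Z_\theta\to Z_0$. Therefore $q_\theta\to q_0$ pointwise, with $q_0>0$ everywhere.

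\textbf{Fatou step.} Because $1-\alpha<0$, the integrand $p^\alpha q_\theta^{1-\alpha}$ is nonnegative and converges pointwise to $p^\alpha q_0^{1-\alpha}$. Fatou's lemma gives
\[
\int p^\alpha q_0^{1-\alpha}\le\liminf_{\theta\downarrow0}\int p^\alpha q_\theta^{1-\alpha}.
\]
Applying the increasing map $u\mapsto\frac{1}{\alpha-1}\log u$ to both sides yields the displayed liminf inequality. Combined with the hypothesis bound, this proves the claim.

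\textbf{Main obstacle.} I expect the only delicate point to be convergence of the normalizing constants $Z_\theta\to Z_0$. I would handle it with the monotone domination argument above, using that $Z_{\theta_0}<\infty$ is implicit in $\pi_{\theta_0}$ being a probability measure. If monotonicity of $g_\theta$ in $\theta$ needs justification, it follows directly from the infimum definition: $\frac{1}{2\theta}\|x-y\|^2$ is decreasing in $\theta$, and taking $y=x$ gives $g_\theta\le g$. Alternatively, one can bypass densities entirely and use the Rényi lower semicontinuity in both arguments under weak convergence, since $\pi_\theta\Rightarrow\pi_0$.
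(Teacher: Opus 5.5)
Your proposal is correct and follows essentially the same route as the paper: monotone pointwise convergence $g_\theta\uparrow g$ with domination by $e^{-V_{\theta_0}}$ to get $Z_\theta\to Z_0$, then Fatou's lemma applied to $\int p_\lambda^\alpha p_\theta^{1-\alpha}\,dx$, combined with the hypothesis for $\theta\in(0,\lambda)$. One small point: monotonicity in $\theta$ together with $g_\theta\le g$ only shows that the limit exists and is at most $g$; identifying the limit as $g$ itself requires lower semicontinuity of $g$ (here, continuity of a finite convex function), which is exactly what the paper invokes.
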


\begin{proof}
Fix $\theta_0\in(0,\lambda_{\rm max}]$. Since $g$ is proper, lower
semicontinuous, and convex,
\[
g_\theta\uparrow g
\qquad\text{pointwise as }\theta\downarrow0,
\]
and, for $0<\theta\le\theta_0$,
\[
g_{\theta_0}\le g_\theta\le g.
\]
Hence, by dominated convergence,
\[
Z_\theta\longrightarrow Z_0,
\]
and the corresponding Lebesgue densities $p_\theta$ converge
pointwise to $p_0$.

For fixed $\lambda>0$, Fatou's lemma gives
\[
\begin{aligned}
\exp\left(
(\alpha-1)\mathcal R_\alpha(\pi_\lambda\|\pi_0)
\right)
&=
\int p_\lambda^\alpha p_0^{1-\alpha}\,dx
\\
&\le
\liminf_{\theta\downarrow0}
\int p_\lambda^\alpha p_\theta^{1-\alpha}\,dx
\\
&=
\liminf_{\theta\downarrow0}
\exp\left(
(\alpha-1)
\mathcal R_\alpha(\pi_\lambda\|\pi_\theta)
\right).
\end{aligned}
\]
Taking $\theta\downarrow0$ with $\theta<\lambda$ and using
$\alpha\lambda\le\tau$ proves the result.
\end{proof}

\subsection{Convergence analysis of the algorithm}

\begin{algorithm}[htbp]
\caption{ULA using successive Moreau Envelopes}
\label{alg:ULA using successive Moreau Envelopes}
\begin{algorithmic}[1]
\Require Initial distribution $\hat\mu_0$, number of steps $\hat N$, $N$, 
        step size $h$ for first phase, step sizes $\{h_n\}_{n=1}^N$, 
        potential functions  $\{V_{\lambda_n}\}_{n=1}^N$
\Ensure Final sample $X_N$

\State Sample $\hat X_0 \sim \hat\mu_0$

\For{$k = 0$ to $\hat N - 1$}
    \State $\hat X_{k+1} \gets \hat X_k - h \nabla V_{\lambda_1}(\hat X_k) + \sqrt{2h} \, \xi_k$, 
          \Comment{$\xi_k \sim \mathcal{N}(0, I_d)$}
\EndFor

\State $X_0 \gets \hat X_{\hat N}$ \Comment{Intermediate sample $\mu_0$}

\For{$n = 1$ to $N$}
    \State $X_n \gets X_{n-1} - h_n \nabla V_{\lambda_n}(X_{n-1}) + \sqrt{2h_n} \, \zeta_n$, 
          \Comment{$\zeta_n \sim \mathcal{N}(0, I_d)$}
\EndFor

\State \Return $X_N$
\end{algorithmic}
\end{algorithm}

We consider the multi-level Langevin scheme introduced in \citet{habring2025diffusionabsolutezerolangevin}, where sampling is performed over a sequence of Moreau envelopes with decreasing smoothing parameters. At each level, a Langevin dynamics associated with a fixed Moreau-regularized potential is run for several iterations, and the output is passed to the next level.

While this framework provides an effective sampling strategy, a precise non-asymptotic characterization of the algorithm, in particular with respect to the choice of step sizes and smoothing parameters, remains unclear.

In this section, we study a concrete instantiation of this scheme by specifying a sequence of potentials $\{V_{\lambda_n}\}_{n=1}^N$ together with corresponding step sizes $\{h_n\}_{n=1}^N$. Starting from an initial distribution obtained by running ULA targeting $V_{\lambda_1}$, we then perform successive ULA updates, where at step $n$ we apply one update targeting $V_{\lambda_n}$. The full procedure is described in Algorithm~\ref{alg:ULA using successive Moreau Envelopes}.

We provide a convergence guarantee for the algorithm in the following theorem, with explicit choice of step sizes and smoothing parameters.
\begin{theorem}
\label{thm:Convergence of ULA using successive Moreau Envelopes}
Consider Algorithm~\ref{alg:ULA using successive Moreau Envelopes}.
Let $\hat\mu_0$ denote the initial distribution and $\mu_n$ denote
the law of $X_n$.
Assume the following conditions hold:
\begin{itemize}
    \item The distribution
    $\pi_{\lambda}\propto e^{-V_{\lambda}}$ on $\mathbb{R}^d$
    satisfies a log-Sobolev inequality (LSI) with uniform constant
    $C_{\mathrm{LSI}}\ge1$ for
    $0<\lambda\le\lambda_{\rm max}$.

    \item For every $\lambda>0$, the gradient $\nabla V_{\lambda}$ is
    $(L_f+\frac{1}{\lambda})$-Lipschitz with $L_f\ge1$.

    \item The R\'enyi divergence
    between different $\pi_\lambda$ satisfies
    \[
    \mathcal R_\alpha(\pi_\lambda\|\pi_\theta)
    \le
    \alpha C_{\text{R\'enyi}}(\lambda-\theta)^2,\qquad
    \forall
    0<\lambda,\theta\le\lambda_{\rm max},
    \alpha|\lambda-\theta|\le\tau,
    \alpha\ge2.
    \]
    By Lemma~\ref{lem:renyi-stability-endpoint}, the same estimate holds with $\theta=0$ whenever $\alpha\lambda\le\tau$.
\end{itemize}
Fix $\gamma>0$ and define $n_0=
16\gamma^4L_f^4
+
240000^2C_{\mathrm{LSI}}^4\gamma^{-4}
+
256\gamma^4\tau^{-4}
+
16\gamma^4\lambda_{\rm max}^{-4}$.
Choose $h_n=\frac{24C_{\mathrm{LSI}}}{n+n_0},\lambda_n
=
\left(
\frac{(n+n_0)^{1/4}}{\gamma}-L_f
\right)^{-1}$.
For $\epsilon\in(0,1)$, choose $h
=
\Theta
\left(
\frac{
\epsilon
}{
dC_{\mathrm{LSI}}
\left(
L_f+C_{\mathrm{LSI}}\gamma^{-2}
+\tau^{-1}+\lambda_{\rm max}^{-1}
\right)^2
}
\right)$.
Then, for {\small $\hat N\ge\Omega\left(\frac{dC_{\mathrm{LSI}}^2\left(L_f+C_{\mathrm{LSI}}\gamma^{-2}+\tau^{-1}+\lambda_{\rm max}^{-1}\right)^2}{\epsilon}\log\left(e+\frac{\mathcal R_2(\hat\mu_0\|\pi_{\lambda_1})}{\epsilon}\right)\right)$}
and {\small $N\ge\Omega\left(\frac{d^2C_{\mathrm{LSI}}^4\gamma^{-4}+C_{\text{R\'enyi}}^2\gamma^4}{\epsilon^2}\right)$},
we have
\[
\mathcal R_{4/3}(\mu_N\|\pi)\le\epsilon.
\]

In particular, choosing $\gamma^4=\frac{dC_{\mathrm{LSI}}^2}{C_{\text{R\'enyi}}}$ balances the two terms in the continuation complexity. Thus it suffices to take
\[
N=O\left(\frac{dC_{\mathrm{LSI}}^2C_{\text{R\'enyi}}}{\epsilon^2}\right).
\]
With this choice, $\hat N$ and $N$ can be chosen so that the total number of updates satisfies
\[
\hat N+N
=O\left(\frac{dC_{\mathrm{LSI}}^2C_{\text{R\'enyi}}}{\epsilon^2}\right)+ O\Bigg(\frac{dC_{\mathrm{LSI}}^2
}{
\epsilon
}
\left(
L_f
+
\sqrt{
\frac{
C_{\text{R\'enyi}}
}{
d
}
}
+
\tau^{-1}
+
\lambda_{\rm max}^{-1}
\right)^2
\log\left(
e+
\frac{
\mathcal R_2(\hat\mu_0\|\pi_{\lambda_1})
}{
\epsilon
}
\right)
\Bigg).
\]
\end{theorem}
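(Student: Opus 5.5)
The plan has three parts: a warm-up phase, a continuation phase with one ULA step per level, and a final step that passes to the nonsmooth target $\pi=\pi_0$.

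\textbf{Warm-up phase.} The first phase is plain ULA targeting $\pi_{\lambda_1}$, so Theorem~\ref{thm:Renyi convergence for ULA with moving targets} applies with a trivial target switch. Split the $\hat N$ steps into blocks of length $2C_{\mathrm{LSI}}\ln 2/h$ and apply Lemma~\ref{lem:Renyi convergence for ULA} on each block, using monotonicity to return to $\mathcal R_2$. Each block contracts $\mathcal R_2(\cdot\|\pi_{\lambda_1})$ by a constant factor and adds $O(dL^2 C_{\mathrm{LSI}} h)$, where
\[
L=L_f+\lambda_1^{-1}=n_0^{1/4}\gamma^{-1}\lesssim L_f+C_{\mathrm{LSI}}\gamma^{-2}+\tau^{-1}+\lambda_{\max}^{-1}.
\]
The chosen $h$ makes this additive term $O(\epsilon)$. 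Taking $\log(e+\mathcal R_2(\hat\mu_0\|\pi_{\lambda_1})/\epsilon)$ blocks then gives $\mathcal R_2(\mu_0\|\pi_{\lambda_1})\lesssim\epsilon$. This accounts for the stated bound on $\hat N$.

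\textbf{Continuation phase.} Now apply Theorem~\ref{thm:Renyi convergence for ULA with moving targets} with $N_i=1$ and $h_i=h_n$. The hypotheses to check are:
\begin{itemize}
\item $h_n\le 2C_{\mathrm{LSI}}\ln 2$, which needs $n+n_0\ge 24/(2\ln 2)$;
\item $h_n\le 1/(10000\,C_{\mathrm{LSI}}L_n^2)$ with $L_n=(n+n_0)^{1/4}/\gamma$. Since $L_n^2h_n=24C_{\mathrm{LSI}}\gamma^{-2}(n+n_0)^{-1/2}$, this holds once $(n+n_0)^{1/2}\ge 240000\,C_{\mathrm{LSI}}^2\gamma^{-2}$, i.e.\ via the second term of $n_0$;
\item $\lambda_n\le\lambda_{\max}$ and $\lambda_n>0$, which follow from the first and fourth terms of $n_0$ (giving $(n+n_0)^{1/4}/\gamma\ge 2L_f$ and $\ge 2/\lambda_{\max}$), so that $\lambda_n\le 2\gamma(n+n_0)^{-1/4}$.
\end{itemize}
The target-switch order is $\alpha_n=24C_{\mathrm{LSI}}/h_n=n+n_0$, and the assumed Rényi stability needs $\alpha_n|\lambda_n-\lambda_{n+1}|\le\tau$. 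By the mean value theorem, $|\lambda_n-\lambda_{n+1}|\lesssim\gamma(n+n_0)^{-5/4}$, so $\alpha_n|\lambda_n-\lambda_{n+1}|\lesssim\gamma(n+n_0)^{-1/4}$, which is at most $\tau$ by the third term of $n_0$.

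The per-step recursion then reads, with $m=n+n_0$,
\[
r_n\le e^{-2/m}r_{n-1}+400\,d\,L_n^2h_n^2+m\,C_{\text{R\'enyi}}(\lambda_n-\lambda_{n+1})^2 .
\]
The error terms are of order
\[
dC_{\mathrm{LSI}}^2\gamma^{-2}m^{-3/2}+C_{\text{R\'enyi}}\gamma^2m^{-3/2}.
\]
Because the contraction factor is $e^{-2/m}\approx\bigl((m-1)/m\bigr)^2$, unrolling gives $\prod_{j=i+1}^N e^{-2/(j+n_0)}\approx\bigl((i+n_0)/(N+n_0)\bigr)^2$. Hence
\[
r_N\lesssim\Bigl(\tfrac{n_0}{N}\Bigr)^2 r_0+N^{-2}\sum_{m\le N} m^{1/2}\bigl(dC_{\mathrm{LSI}}^2\gamma^{-2}+C_{\text{R\'enyi}}\gamma^2\bigr)\lesssim\epsilon+N^{-1/2}\bigl(dC_{\mathrm{LSI}}^2\gamma^{-2}+C_{\text{R\'enyi}}\gamma^2\bigr).
\]
The last step at $n=N$ targets $\pi_{\lambda_{N+1}}$, which is harmless: it is handled either by stopping the recursion one step earlier or by absorbing it into the endpoint step below. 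Requiring the second term to be at most $\epsilon$ yields the stated $N\ge\Omega\bigl((d^2C_{\mathrm{LSI}}^4\gamma^{-4}+C_{\text{R\'enyi}}^2\gamma^4)/\epsilon^2\bigr)$.

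\textbf{Endpoint.} Apply the weak triangle inequality of Lemma~\ref{lemma:renyi-basic} with $q=4/3$ and $\lambda=2/3$. This gives
\[
\mathcal R_{4/3}(\mu_N\|\pi)\le 2\,\mathcal R_2(\mu_N\|\pi_{\lambda_N})+\mathcal R_2(\pi_{\lambda_N}\|\pi_0).
\]
By Lemma~\ref{lem:renyi-stability-endpoint}, the last term is at most $2C_{\text{R\'enyi}}\lambda_N^2\lesssim C_{\text{R\'enyi}}\gamma^2N^{-1/2}\lesssim\epsilon$, provided $2\lambda_N\le\tau$, which holds by the choice of $n_0$. Rescaling $\epsilon$ by constants completes the proof.

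Balancing the two terms in the bound on $N$ with $\gamma^4=dC_{\mathrm{LSI}}^2/C_{\text{R\'enyi}}$ gives $N=O(dC_{\mathrm{LSI}}^2C_{\text{R\'enyi}}/\epsilon^2)$. Substituting this $\gamma$ into $C_{\mathrm{LSI}}\gamma^{-2}=\sqrt{C_{\text{R\'enyi}}/d}$ turns the $\hat N$ bound into the stated total complexity.

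The main obstacle is the bookkeeping in the unrolled sum. One has to confirm that the discretization term and the switching term both decay like $m^{-3/2}$, and that the weights $(m/N)^2$ give $N^{-1/2}$ rather than a worse rate. It also has to be verified that every hypothesis is met from $n=1$ onward through the particular constants in $n_0$.
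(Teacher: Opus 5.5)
Your proposal is correct and follows the paper's proof. It applies Theorem~\ref{thm:Renyi convergence for ULA with moving targets} with $N_i=1$ and uses $|\lambda_n-\lambda_{n+1}|\le\gamma(n+n_0)^{-5/4}$, so that the switching order $24C_{\mathrm{LSI}}/h_n=n+n_0$ meets the $\tau$-condition. It obtains error terms of order $(dC_{\mathrm{LSI}}^2\gamma^{-2}+C_{\text{R\'enyi}}\gamma^2)(n+n_0)^{-3/2}$ and closes with the weak triangle inequality at $q=4/3$ plus Lemma~\ref{lem:renyi-stability-endpoint}.

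The differences are minor. For the warm-up you iterate Lemma~\ref{lem:Renyi convergence for ULA} over blocks of length about $2C_{\mathrm{LSI}}\ln 2$, which keeps the argument self-contained; the paper instead cites Theorem~4 of Chewi et al.\ at order $3$. You also unroll the recursion through $\prod_{j=i+1}^{N}e^{-2/(j+n_0)}\le\bigl((i+n_0+1)/(N+n_0+1)\bigr)^2$, whereas the paper uses $e^{-2/m}\le 1-1/m$ and telescopes $(n+n_0)r_n$. Note that this product factor is at most $1$, while the $(n_0/N)^2$ you wrote is not when $N<n_0$.
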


The proof of Theorem~\ref{thm:Convergence of ULA using successive Moreau Envelopes} can be found in Appendix \ref{appendix:Convergence of ULA using successive Moreau Envelopes}. 

Note that our smoothness bound $L_{V_\lambda}\le L_f+\frac{1}{\lambda}$ explicitly tracks the deterioration of the Moreau-regularized potential as $\lambda\downarrow0$. The Moreau specialization of \citet{habring2026forwardklconvergencetimeinhomogeneouslangevin} instead starts from an original potential that is differentiable and has a Lipschitz and dissipative gradient, and works on a parameter interval where the regularized smoothness is uniformly controlled. Our setting therefore directly accommodates a decomposition $V=f+g$ with a nonsmooth convex component $g$. We also assume that $\pi_\lambda \propto e^{-V_\lambda}$ satisfies a log-Sobolev inequality with a uniform constant $C_{\mathrm{LSI}}$ for $\lambda \le \lambda_{\max}$. This is mainly for simplicity. When $\lambda$ ranges over a compact interval, $g_\lambda$ can be viewed as a smooth approximation of $g$ with controlled perturbation, so it is plausible in many settings that the corresponding measures $\pi_\lambda$ share comparable functional inequality constants. Moreover, the R\'enyi divergence condition quantifies how fast the target distributions $\pi_\lambda$ change as $\lambda$ varies, and is consistent with the discussion at the end of the previous subsection.

The parameter choices in the theorem reflect a balance between two competing effects. As $\lambda_n$ decreases, the target distribution $\pi_{\lambda_n}$ becomes closer to the original distribution $\pi$, while the smoothness of $V_{\lambda_n}$ deteriorates since the Lipschitz constant grows as $1/\lambda_n$. Consequently, the step size $h_n$ must decrease to control the discretization error of ULA. At the same time, if $\lambda_n$ changes too quickly, the “moving target” error (i.e., the mismatch between $\pi_{\lambda_{n-1}}$ and $\pi_{\lambda_n}$) becomes large. The schedules ${\lambda_n}$ and ${h_n}$ are therefore designed to balance discretization error and target drift, leading to the overall convergence guarantee in R\'enyi divergence.

\subsection{Comparison with fixed-$\lambda$ MYULA}
\label{subsec:fixed-versus-successive-moreau}

We compare the successive scheme with fixed-$\lambda$ MYULA under
the same assumptions and the same final
$\mathcal R_{4/3}$-accuracy criterion.

\paragraph{Fixed-$\lambda$ MYULA.}
Assume that $\epsilon$ is sufficiently small so that
\[
\epsilon
\le
\min\left\{
1,\,
4C_{\text{R\'enyi}}\lambda_{\rm max}^2,\,
C_{\text{R\'enyi}}\tau^2
\right\}.
\]
Choose
\[
\lambda_{\rm fix}
=
\sqrt{
\frac{\epsilon}{4C_{\text{R\'enyi}}}
}.
\]
Then $\lambda_{\rm fix}\le\lambda_{\rm max}, 2\lambda_{\rm fix}\le\tau$, and the R\'enyi-stability assumption gives
\[
\mathcal R_2(\pi_{\lambda_{\rm fix}}\|\pi)
\le
2C_{\text{R\'enyi}}\lambda_{\rm fix}^2
=
\frac{\epsilon}{2}.
\]

Applying
\citet[Theorem~4]{chewi2024analysislangevinmontecarlo}
to the fixed target $\pi_{\lambda_{\rm fix}}$, with R\'enyi order
$q=3$ and target accuracy $\epsilon/4$, gives
\[
\mathcal R_2(\mu_K\|\pi_{\lambda_{\rm fix}})
\le
\mathcal R_3(\mu_K\|\pi_{\lambda_{\rm fix}})
\le
\frac{\epsilon}{4}
\]
after
\[
\begin{aligned}
K_{\rm fix}
=
O\Bigg(
dC_{\mathrm{LSI}}^2
\left(
\frac{C_{\text{R\'enyi}}}{\epsilon^2}
+
\frac{L_f^2}{\epsilon}
\right)
\log\left(
e+
\frac{
\mathcal R_2(
\hat\mu_0\|\pi_{\lambda_{\rm fix}}
)
}{
\epsilon
}
\right)
\Bigg)
\end{aligned}
\]
iterations. Here, we used
$\lambda_{\rm fix}^{-1}
=2\sqrt{C_{\text{R\'enyi}}/\epsilon}$, and the logarithmic factor
is obtained by solving the exponentially decaying initialization
term in the error bound of that theorem at accuracy $\epsilon$.
By the weak triangle inequality,
\[
\begin{aligned}
\mathcal R_{4/3}(\mu_K\|\pi)
\le
2\mathcal R_2(\mu_K\|\pi_{\lambda_{\rm fix}})
+
\mathcal R_2(\pi_{\lambda_{\rm fix}}\|\pi)
\le
\epsilon.
\end{aligned}
\]

\paragraph{Balanced successive-Moreau schedule.}
For the balanced choice
\[
\gamma^4
=
\frac{
dC_{\mathrm{LSI}}^2
}{
C_{\text{R\'enyi}}
},
\]
Theorem~\ref{thm:Convergence of ULA using successive Moreau Envelopes}
gives
\[
K_{\rm succ}
={}
O\left(
\frac{
dC_{\mathrm{LSI}}^2C_{\text{R\'enyi}}
}{
\epsilon^2
}
\right)
+
O\Bigg(
\frac{
dC_{\mathrm{LSI}}^2
}{
\epsilon
}
\left(
L_f
+
\sqrt{
\frac{
C_{\text{R\'enyi}}
}{
d
}
}
+
\tau^{-1}
+
\lambda_{\rm max}^{-1}
\right)^2
\log\left(
e+
\frac{
\mathcal R_2(
\hat\mu_0\|\pi_{\lambda_1}
)
}{
\epsilon
}
\right)
\Bigg).
\]

Thus, within the present upper-bound analysis, the two schemes have
the same leading polynomial dependence
\[
\frac{
dC_{\mathrm{LSI}}^2C_{\text{R\'enyi}}
}{
\epsilon^2
}.
\]
The difference is where the initializer-dependent factor appears.
For fixed-$\lambda$ MYULA, this factor multiplies the leading
high-accuracy term. For the successive scheme, it appears only in
the first-phase cost and does not multiply the leading continuation
cost.

Fixed-$\lambda$ MYULA contracts the initial error while already
targeting
\[
\lambda_{\rm fix}\asymp\sqrt{\epsilon},
\]
whose smoothness constant grows as $\epsilon^{-1/2}$. The successive
scheme instead reduces the initial error at the fixed target
$\pi_{\lambda_1}$ and then gradually decreases the Moreau parameter
using warm starts from the preceding targets.

For fixed problem parameters and as $\epsilon\downarrow0$, the
first-phase term of the successive bound is of lower order in
$\epsilon$ than its continuation term. This is an upper-bound
comparison rather than a claim of unconditional practical
superiority: the constants are conservative, and the two
initialization divergences in the bounds need not be directly
comparable.

\subsection{Verifying the assumptions: a nonconvex example class}

Theorem~\ref{thm:Convergence of ULA using successive Moreau Envelopes} is stated under three assumptions: a uniform log-Sobolev inequality for the Moreau-regularized targets, a smoothness bound for \(V_\lambda\), and a Rényi stability bound between \(\pi_\lambda\) and \(\pi_\theta\). We now give a class of potentials for which these assumptions can be verified explicitly.

The example below shows that the framework is not restricted to globally convex potentials. We allow the potential to contain a bounded smooth perturbation \(W\), which may make the full potential nonconvex. The uniform LSI follows from the strongly convex component and the Holley--Stroock perturbation principle, while the Rényi stability follows from the bounded-subgradient property of the nonsmooth component.

\begin{proposition}[A nonconvex example class]\label{prop:nonconvex-example-class}
Let
\[
V(x)=f(x)+W(x)+g(x),
\qquad
V_\lambda(x)=f(x)+W(x)+g_\lambda(x),
\]
and let
\[
\pi_\lambda(dx)=Z_\lambda^{-1}e^{-V_\lambda(x)}\,dx,
\qquad
\pi(dx)=Z^{-1}e^{-V(x)}\,dx.
\]
Assume that the following conditions hold:
\begin{enumerate}
\item $f$ is differentiable, $m$-strongly convex, and $\nabla f$ is $L_f$-Lipschitz.
\item $W$ is differentiable, $\nabla W$ is $L_W$-Lipschitz, and
\[
\operatorname{osc}(W):=\sup_{x\in\mathbb{R}^d}W(x)-\inf_{x\in\mathbb{R}^d}W(x)<\infty.
\]
\item $g:\mathbb R^d\to\mathbb R$ is convex with $\inf g>-\infty$, and its subgradients are uniformly bounded: there exists $G>0$ such that
\[
\|h\|\le G,\qquad \forall x\in\mathbb{R}^d,\ h\in\partial g(x).
\]
\end{enumerate}
Then the family $\{\pi_\lambda\}_{\lambda>0}$ satisfies the following properties.
First, $\pi_\lambda$ satisfies a log-Sobolev inequality with a constant uniform in $\lambda$:
\[
C_{\mathrm{LSI}}(\pi_\lambda)
\le
\frac{e^{\operatorname{osc}(W)}}{m}.
\]
Second, $\nabla V_\lambda$ is Lipschitz with
\[
L_\lambda
\le
L_f+L_W+\frac1\lambda.
\]
Third, for any $\lambda,\theta\ge0$ and any $\alpha\ge2$ satisfying
\[
\alpha|\lambda-\theta|\le \frac{2}{G^2},
\]
where we use the convention $g_0=g$ and $\pi_0=\pi$, we have
\[
\mathcal{R}_\alpha(\pi_\lambda\|\pi_\theta)
\le
\frac{e^2G^4}{2}\alpha(\lambda-\theta)^2.
\]
In particular, the Rényi stability condition in Theorem~\ref{thm:Convergence of ULA using successive Moreau Envelopes} holds with
\[
C_{\text{R\'enyi}}=\frac{e^2G^4}{2},
\qquad
\tau=\frac{2}{G^2}.
\]
\end{proposition}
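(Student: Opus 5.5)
The proof splits along the three claims: the LSI and smoothness bounds are standard perturbation arguments, and the Rényi bound reduces to Theorem~\ref{thm:Renyi divergence calculation} once the potential gap is shown to be bounded.

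\emph{Uniform LSI.} I would write $V_\lambda=(f+g_\lambda)+W$.
\begin{enumerate}
\item By the preliminaries, $g_\lambda$ is convex, and $g_0=g$ is convex by assumption. So $f+g_\lambda$ is $m$-strongly convex for every $\lambda\ge0$.
\item By the Bakry--\'Emery criterion, $\tilde\pi_\lambda\propto e^{-(f+g_\lambda)}$ satisfies an LSI with constant $1/m$. For the nonsmooth case $\lambda=0$ this is still fine: one can either smooth $g$, or use the $\lambda>0$ result and pass to the limit.
\item $\pi_\lambda$ is the bounded perturbation $d\pi_\lambda\propto e^{-W}d\tilde\pi_\lambda$. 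The Holley--Stroock perturbation principle then multiplies the LSI constant by at most $e^{\sup W-\inf W}=e^{\operatorname{osc}(W)}$.
\end{enumerate}
This gives $C_{\mathrm{LSI}}(\pi_\lambda)\le e^{\operatorname{osc}(W)}/m$.

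\emph{Smoothness.} $\nabla V_\lambda=\nabla f+\nabla W+\nabla g_\lambda$. The three Lipschitz constants are $L_f$, $L_W$ and $1/\lambda$ (the last from the preliminaries), and they add.

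\emph{R\'enyi stability.} I would combine Lemma~\ref{lem:moreau-lambda-stability} with Theorem~\ref{thm:Renyi divergence calculation}.
\begin{enumerate}
\item Apply Lemma~\ref{lem:moreau-lambda-stability} with $\Phi\equiv1$ and $C=G^2$. The hypothesis $\|h\|^2\le G^2$ holds for all subgradients, so for $\lambda,\theta>0$ we get $|g_\lambda(x)-g_\theta(x)|\le \tfrac{G^2}{2}|\lambda-\theta|$.
\item To include $\theta=0$, use that $g_\theta\uparrow g$ pointwise as $\theta\downarrow0$ (the monotonicity already used in Lemma~\ref{lem:renyi-stability-endpoint}). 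Letting $\theta\downarrow0$ in the inequality gives the same bound with $g_0=g$.
\item Since $W$ and $f$ cancel, $\delta V=V_\lambda-V_\theta=g_\lambda-g_\theta$. Hence $|\delta V|\le\eta\phi$ with $\eta=|\lambda-\theta|$ and the constant weight $\phi\equiv G^2/2$.
\item Now apply Theorem~\ref{thm:Renyi divergence calculation} with reference measure $\pi_\theta$ (including $\pi_0=\pi$) and $\tau=2/G^2$. Because $\phi$ is constant, the moment conditions are trivially finite.
\item The step-size condition $\alpha\eta\le\tau$ is exactly the assumed $\alpha|\lambda-\theta|\le 2/G^2$.
\item Since $\tau\phi=1$, we get $C_{\mathrm{var}}=e^{1}\cdot\tfrac{G^4}{4}e^{1}=\tfrac{e^2G^4}{4}$.
\end{enumerate}
The theorem then yields $\mathcal R_\alpha(\pi_\lambda\|\pi_\theta)\le 2\alpha C_{\mathrm{var}}\eta^2=\tfrac{e^2G^4}{2}\alpha(\lambda-\theta)^2$, which is the claimed bound with $C_{\text{R\'enyi}}=e^2G^4/2$ and $\tau=2/G^2$.

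The main obstacle is the endpoint $\theta=0$ together with the LSI for the nonsmooth $\lambda=0$ target. Lemma~\ref{lem:moreau-lambda-stability} is stated only for $\lambda_1,\lambda_2>0$, and Bakry--\'Emery is usually stated for smooth potentials. For both I would rely on the monotone convergence $g_\theta\uparrow g$: pass to the limit in the pointwise potential bound, and use that LSI constants are stable under the resulting weak convergence with dominated densities. The case $\lambda=0$ is not needed for Theorem~\ref{thm:Convergence of ULA using successive Moreau Envelopes}, which only requires $\lambda>0$ for the LSI.
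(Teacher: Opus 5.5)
Your proposal is correct and follows the paper's proof essentially step by step: strong convexity of $f+g_\lambda$ plus Holley--Stroock for the LSI, additivity of the Lipschitz constants, and Lemma~\ref{lem:moreau-lambda-stability} with $\Phi\equiv1$, $C=G^2$ fed into Theorem~\ref{thm:Renyi divergence calculation}. Your normalization $\eta=|\lambda-\theta|$, $\phi\equiv G^2/2$, $\tau=2/G^2$ is a rescaling of the paper's $\eta=\tfrac{G^2}{2}|\lambda-\theta|$, $\varphi\equiv1$, $\tau=1$ and yields the same constant. At the endpoint the paper gets $0\le g-g_\lambda\le \tfrac{G^2}{2}\lambda$ directly from the $G$-Lipschitzness of $g$, while you let $\theta\downarrow0$ in the pointwise bound; both arguments are valid.
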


\begin{proof}
We first prove the uniform log-Sobolev inequality. Since $f$ is $m$-strongly convex and $g_\lambda$ is convex, the function $f+g_\lambda$ is $m$-strongly convex. Hence the probability measure
\[
\widetilde\pi_\lambda(dx)
=
\widetilde Z_\lambda^{-1}e^{-f(x)-g_\lambda(x)}\,dx
\]
satisfies a log-Sobolev inequality with constant at most $1/m$. Since $\pi_\lambda$ is obtained from $\widetilde\pi_\lambda$ by the bounded perturbation $W$, the Holley--Stroock perturbation principle gives
\[
C_{\mathrm{LSI}}(\pi_\lambda)
\le
e^{\operatorname{osc}(W)}C_{\mathrm{LSI}}(\widetilde\pi_\lambda)
\le
\frac{e^{\operatorname{osc}(W)}}{m}.
\]
This bound is uniform in $\lambda$.

Next, since $\nabla f$ is $L_f$-Lipschitz, $\nabla W$ is $L_W$-Lipschitz, and $g_\lambda$ is $1/\lambda$-smooth, we have
\[
\|\nabla V_\lambda(x)-\nabla V_\lambda(y)\|
\le
\left(L_f+L_W+\frac1\lambda\right)\|x-y\|,
\]
which proves the smoothness bound.

It remains to control the Rényi divergence between $\pi_\lambda$ and $\pi_\theta$. By the boundedness of the subgradients of $g$, Lemma~\ref{lem:moreau-lambda-stability} applies with $\Phi\equiv1$ and $C=G^2$. Therefore, for any $\lambda,\theta>0$,
\[
|g_\lambda(x)-g_\theta(x)|
\le
\frac{G^2}{2}|\lambda-\theta|.
\]
The same bound also holds when one of the parameters is zero, with the convention $g_0=g$. Indeed, since $g$ is $G$-Lipschitz, the Moreau envelope satisfies
\[
0\le g(x)-g_\lambda(x)\le \frac{G^2}{2}\lambda.
\]
Consequently, for all $\lambda,\theta\ge0$,
\[
|V_\lambda(x)-V_\theta(x)|
=
|g_\lambda(x)-g_\theta(x)|
\le
\frac{G^2}{2}|\lambda-\theta|.
\]
Applying Theorem~\ref{thm:Renyi divergence calculation} with
\[
\eta=\frac{G^2}{2}|\lambda-\theta|,
\qquad
\varphi\equiv1,
\qquad
\tau=1,
\]
we obtain, whenever $\alpha\eta\le1$, equivalently $\alpha|\lambda-\theta|\le 2/G^2$,
\[
\mathcal{R}_\alpha(\pi_\lambda\|\pi_\theta)
\le
2\alpha e^2 \eta^2
=
\frac{e^2G^4}{2}\alpha(\lambda-\theta)^2.
\]
This completes the proof.
\end{proof}

To apply Theorem~\ref{thm:Convergence of ULA using successive Moreau Envelopes}, we use the decomposition
\[
V=\widetilde f+g,
\qquad
\widetilde f:=f+W.
\]
The smoothness and LSI constants in that theorem may be chosen as
\[
\overline L_f
:=
\max\{1,L_f+L_W\},
\qquad
\overline C_{\mathrm{LSI}}
:=
\max\left\{
1,\frac{e^{\operatorname{osc}(W)}}{m}
\right\}.
\]
The remaining constants are
\[
C_{\text{R\'enyi}}
=
\frac{e^2G^4}{2},
\qquad
\tau
=
\frac{2}{G^2}.
\]
Thus, Theorem~\ref{thm:Convergence of ULA using successive Moreau Envelopes} applies with $\overline L_f$ and $\overline C_{\mathrm{LSI}}$ in place of $L_f$ and $C_{\mathrm{LSI}}$, respectively.

We now give several common choices of $g$ for which the bounded-subgradient condition in Proposition~\ref{prop:nonconvex-example-class} is easy to verify. In all examples below, $g$ should be understood as the nonsmooth component in the decomposition $V=f+W+g$, where $f$ and $W$ satisfy the assumptions of Proposition~\ref{prop:nonconvex-example-class}. In the standard convex setting one may simply take $W=0$. More generally, the same examples remain covered after adding a smooth bounded perturbation $W$ satisfying the assumptions of Proposition~\ref{prop:nonconvex-example-class}; in this case the resulting potential may be nonconvex, while the uniform LSI, smoothness, and Rényi stability estimates remain valid.

\paragraph{Sparse and generalized lasso regularization.}
Let
\[
g(x)=\gamma\|Dx\|_1,
\]
where $D\in\mathbb{R}^{r\times d}$ is a linear operator. This class includes the usual sparse penalty $g(x)=\gamma\|x\|_1$ by taking $D=I$, and also generalized lasso and fused-lasso type penalties \citep{tibshirani1996regression}. For any $h\in\partial g(x)$, there exists $s\in\mathbb{R}^r$ with $\|s\|_\infty\le1$ such that
\[
h=\gamma D^\top s.
\]
Hence
\[
\|h\|
\le
\gamma\|D\|_{\mathrm{op}}\sqrt r.
\]
Thus Proposition~\ref{prop:nonconvex-example-class} applies with
\[
G=\gamma\|D\|_{\mathrm{op}}\sqrt r.
\]
In particular, for the sparse penalty $g(x)=\gamma\|x\|_1$, one may take
\[
G=\gamma\sqrt d.
\]

\paragraph{Total-variation regularization.}
Discrete total-variation penalties are also covered by the previous example \citep{rudin1992nonlinear}. Indeed, if $D$ is a finite-difference operator, then
\[
g(x)=\gamma\|Dx\|_1
\]
corresponds to anisotropic total-variation regularization. Therefore the proposition applies with
\[
G=\gamma\|D\|_{\mathrm{op}}\sqrt r.
\]
This provides a common example from imaging inverse problems where the nonsmooth term naturally arises from edge-preserving regularization.

\paragraph{Group-sparse regularization.}
Let
\[
g(x)=\gamma\sum_{j=1}^J\|x_{G_j}\|_2,
\]
where $\{G_j\}_{j=1}^J$ are disjoint groups \citep{yuan2006model}. Then every $h\in\partial g(x)$ satisfies
\[
\|h\|^2
=
\sum_{j=1}^J\|h_{G_j}\|^2
\le
\gamma^2J.
\]
Hence Proposition~\ref{prop:nonconvex-example-class} applies with
\[
G=\gamma\sqrt J.
\]

\paragraph{Hinge-loss type potentials.}
Let
\[
g(x)=\gamma\sum_{i=1}^n (1-y_i a_i^\top x)_+,
\]
where $a_i\in\mathbb{R}^d$ and $y_i\in\{-1,1\}$. This corresponds to the nonsmooth loss used in support-vector machines \citep{cortes1995support}. Then any $h\in\partial g(x)$ can be written as
\[
h=-\gamma\sum_{i=1}^n s_i y_i a_i,
\qquad
s_i\in[0,1].
\]
If $A\in\mathbb{R}^{n\times d}$ has rows $a_i^\top$, then
\[
\|h\|
\le
\gamma\|A\|_{\mathrm{op}}\sqrt n.
\]
Thus the proposition applies with
\[
G=\gamma\|A\|_{\mathrm{op}}\sqrt n.
\]

\paragraph{Nuclear-norm regularization.}
For matrix-valued variables $X\in\mathbb{R}^{d_1\times d_2}$, let
\[
g(X)=\gamma\|X\|_*,
\]
where $\|\cdot\|_*$ denotes the nuclear norm. This is commonly used in convex approaches to low-rank matrix recovery and matrix completion \citep{candes2012exact}. Since every subgradient of the nuclear norm has spectral norm at most one, its Frobenius norm is at most $\sqrt{\min(d_1,d_2)}$. Hence Proposition~\ref{prop:nonconvex-example-class} applies with
\[
G=\gamma\sqrt{\min(d_1,d_2)}.
\]
This covers low-rank matrix regularization problems such as matrix completion.

These examples show that the assumptions in Proposition~\ref{prop:nonconvex-example-class} cover several standard nonsmooth regularizers. The role of the proposition is to connect these concrete models with the abstract assumptions of Theorem~\ref{thm:Convergence of ULA using successive Moreau Envelopes}: the strongly convex component provides a uniform LSI, the Moreau envelope gives the required smoothness bound, and the bounded-subgradient property yields the Rényi stability between consecutive Moreau-regularized targets. Therefore, Theorem~\ref{thm:Convergence of ULA using successive Moreau Envelopes} applies to these sampling problems with explicit choices of the constants \(C_{\mathrm{LSI}}\), \(L_\lambda\), \(C_{\text{R\'enyi}}\), and \(\tau\).

\section{Conclusion}
In this paper, we have analyzed the convergence of LMC with moving targets in R\'enyi divergence. Our analysis is based on the approach developed in \citet{chewi2024analysislangevinmontecarlo} to bound the R\'enyi divergence of LMC and the weak triangle inequality. Then we have analyzed LMC with successive Moreau envelopes proposed by \citet{habring2025diffusionabsolutezerolangevin}. We have provided precise choice of step sizes and smoothing parameters.


\newpage

\bibliographystyle{plainnat}
\bibliography{reference}

\newpage

\appendix

\section{Technical appendices and supplementary material}

\subsection{Proof of \eqref{eq:upper bound for D}}\label{subsec:upper bound for D}

Since $\nabla V$ is $L$-Lipschitz, we have 
\[\|\nabla V(x_{kh})\| 
\le \|\nabla V(x_t)\| + L \|x_t - x_{kh}\| \\
\le \|\nabla V(x_t)\| + hL \|\nabla V(x_{kh})\| + \sqrt{2}L \|B_t-B_{kh}\|.
\]
Since $h \leq \frac{1}{3L}$, then rearranging gives $\|\nabla V(x_{kh})\| \leq \frac{3}{2} \|\nabla V(x_t)\| + \frac{3\sqrt{2}L}{2} \|B_t-B_{kh}\|$.
Thus we have
\begin{align*}
\|\nabla V(x_t) - \nabla V(x_{kh})\|^2 
&\le L^2\|x_t - x_{kh}\|^2\\
&\le 2L^2 (t-kh)^2 \|\nabla V(x_{kh})\|^2 + 4L^2 \|B_t-B_{kh}\|^2\\
&\le 9L^2 (t-kh)^2 \|\nabla V(x_{t})\|^2 + (18L^4(t-kh)^2+4L^2) \|B_t-B_{kh}\|^2.
\end{align*}

Thus
\begin{align*}
D(t) &\le 9L^2 (t-kh)^2\mathbb{E}[\psi_t(x_t) \|\nabla V(x_{t})\|^2]+(18L^4(t-kh)^2+4L^2)\mathbb{E}[\psi_t(x_t) \|B_t-B_{kh}\|^2]\\
&\le 9L^2 (t-kh)^2\mathbb{E}[\psi_t(x_t) \|\nabla V(x_{t})\|^2]+6L^2\mathbb{E}[\psi_t(x_t) \|B_t-B_{kh}\|^2].
\end{align*}

For the first term, we apply Lemma~20 in \citet{chewi2024analysislangevinmontecarlo} to the measure $\psi_t\mu_t$:
\begin{align*}
\mathbb{E}_{\psi_t\mu_t}\bigl[\|\nabla V\|^2\bigr]
&\leq \mathbb{E}_{\mu_t}\left[ \psi_t \left\| \nabla \ln (\psi_t \frac{d\mu_t}{d\pi}) \right\|^2 \right] + 2dL \\
&= \frac{\mathbb{E}_\pi\bigl[\rho_t^{q(t)} \,\|\nabla\ln(\rho_t^{q(t)})\|^2\bigr]}{\mathbb{E}_\pi(\rho_t^{q(t)})} + 2dL \\
&= \frac{4\,\mathbb{E}_\pi\bigl[\|\nabla(\rho_t^{q(t)/2})\|^2\bigr]}{\mathbb{E}_\pi(\rho_t^{q(t)})} + 2dL .
\end{align*}

For the second term, we apply Lemma~19 in \citet{chewi2024analysislangevinmontecarlo}:
\[
\mathbb{E}\bigl[\psi_t(x_t)\,\|B_t-B_{kh}\|^2\bigr]
\leq 14d(t-kh) + 32hC_{\mathrm{LSI}}\,\frac{\mathbb{E}_\pi\bigl[\|\nabla(\rho_t^{q(t)/2})\|^2\bigr]}{\mathbb{E}_\pi(\rho_t^{q(t)})} .
\]
Thus we have proved \eqref{eq:upper bound for D}.

\subsection{General R\'enyi orders}
\label{appendix:general-renyi-orders}

We first extend
Lemma~\ref{lem:Renyi convergence for ULA}
to an arbitrary initial R\'enyi order.

\begin{lemma}[General-order short-time LMC bound]
\label{lem:general-order-ula}
Let $\mu_t$ be the law of the continuous interpolation of LMC
targeting
\[
\pi\propto e^{-V}
\]
on $\mathbb R^d$. Assume that $\pi$ satisfies an LSI with constant
$C\ge1$ and that $\nabla V$ is $L$-Lipschitz with $L\ge1$.

Fix $p\ge2$, let
\[
m_p=2p-1,
\qquad
T=Nh,
\]
and assume
\[
T\le2C\log2,
\qquad
h
\le
\frac{1}{10000\,CL^2}
\left(
\frac{3}{m_p}
\right)^2.
\]
Then
\[
\begin{aligned}
\mathcal R_{1+(p-1)e^{T/(2C)}}(\mu_T\|\pi)
\le
\exp\left(
-\frac{T}{2Cm_p}
\right)
\mathcal R_p(\mu_0\|\pi)
+
200(p-1)NdL^2h^2.
\end{aligned}
\]
\end{lemma}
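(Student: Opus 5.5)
We follow the proof of Lemma~\ref{lem:Renyi convergence for ULA} verbatim, changing only the time-varying order. Set
\[
q(t)=1+(p-1)e^{t/(2C)},\qquad \Phi(t)=\frac{1}{q(t)}\ln\int\rho_t^{q(t)}\,d\pi ,
\]
so that $q(0)=p$ and $\mathcal R_{q(t)}(\mu_t\|\pi)=\frac{q(t)}{q(t)-1}\Phi(t)$. For $t\le T\le 2C\log 2$ we have $e^{t/(2C)}\le 2$, hence $q(t)\in[p,2p-1]=[p,m_p]$ and $q(t)-1\le 2(p-1)$. The growth rate $\dot q=(q-1)/(2C)$ is exactly the one used in \citet[Proposition~21]{chewi2024analysislangevinmontecarlo}. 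Therefore the differential inequality \eqref{eq:differential inequality} holds unchanged on each interval $[kh,(k+1)h]$:
\[
\partial_t\Phi\le-\frac{2(q-1)}{q^2}I+(q-1)D .
\]
The bound \eqref{eq:upper bound for D} on $D(t)$ from Appendix~\ref{subsec:upper bound for D} also applies without change, since it uses only Lipschitzness, $h\le 1/(3L)$, and Lemmas~19--20 of \citet{chewi2024analysislangevinmontecarlo}, none of which depend on $q$.

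Next we merge the $I$ terms. The coefficient of $I$ is
\[
(q-1)\Bigl(-\frac{2}{q^2}+36L^2(t-kh)^2+192hL^2C\Bigr).
\]
The strengthened step-size condition $h\le \frac{9}{10000\,CL^2m_p^2}$ gives
\[
36L^2h^2+192hL^2C\le \frac{1}{m_p^2}\le\frac1{q^2},
\]
by the same numerics as before with $1/9$ replaced by $1/m_p^2$. The coefficient of $I$ is therefore at most $-(q-1)/q^2$. Applying \citet[Lemma~5]{vempala2022rapidconvergenceunadjustedlangevin} in the form $I\ge \frac{q}{2C(q-1)}\Phi$ yields
\[
\partial_t\Phi\le-\frac{1}{2qC}\Phi+(q-1)\bigl(18dL^3(t-kh)^2+84dL^2(t-kh)\bigr).
\]
Using $q\le m_p$ and $q-1\le 2(p-1)$, this becomes
\[
\partial_t\Phi\le-\frac{1}{2Cm_p}\Phi+2(p-1)\bigl(18dL^3(t-kh)^2+84dL^2(t-kh)\bigr).
\]

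We then integrate with the factor $e^{t/(2Cm_p)}$ over each step. Using $Lh\le1$ for the cubic term, this gives
\[
\Phi((k+1)h)\le e^{-h/(2Cm_p)}\Phi(kh)+100(p-1)dL^2h^2 .
\]
Iterating over the $N$ steps and dropping the contraction factors on the additive terms gives
\[
\Phi(T)\le e^{-T/(2Cm_p)}\Phi(0)+100(p-1)NdL^2h^2 .
\]

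Finally we convert back to Rényi divergences. At time $0$, $\Phi(0)=\frac{p-1}{p}\mathcal R_p(\mu_0\|\pi)$. At time $T$, $\frac{q(T)}{q(T)-1}\le\frac{p}{p-1}\le 2$, since $q(T)\ge p\ge 2$. Multiplying the bound on $\Phi(T)$ by $\frac{q(T)}{q(T)-1}$ therefore yields
\[
\mathcal R_{q(T)}(\mu_T\|\pi)\le e^{-T/(2Cm_p)}\mathcal R_p(\mu_0\|\pi)+200(p-1)NdL^2h^2,
\]
which is the claim.

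The main obstacle is confirming that Proposition~21 and Lemma~19 of \citet{chewi2024analysislangevinmontecarlo} hold for orders up to $m_p$ rather than only up to $3$. The constants in Lemma~19, such as $14d$ and $32hC$, were derived under a step-size assumption on $h$, and they may carry a hidden dependence on $q$. If such a dependence appears, the plan is to absorb it into the extra factor $(3/m_p)^2$ in the step-size condition.
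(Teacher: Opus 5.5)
Your proposal is correct and follows the paper's proof step for step. It uses the same order schedule $q(t)=1+(p-1)e^{t/(2C)}\in[p,m_p]$, the same appeal to Proposition~21 of \citet{chewi2024analysislangevinmontecarlo} together with \eqref{eq:upper bound for D}, the same absorption $36L^2h^2+192hL^2C\le 1/m_p^2\le 1/q^2$, and the same final conversion via $\frac{q(T)}{q(T)-1}\cdot\frac{p-1}{p}\le 1$ and $\frac{q(T)}{q(T)-1}\le 2$. The paper also invokes those cited results for orders up to $m_p$ without further comment, so it does not resolve the concern you flag at the end any differently than you do.
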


\begin{proof}
Define
\[
r(s)=1+(p-1)e^{s/(2C)},
\qquad
\Phi(s)
=
\frac{1}{r(s)}
\log\int\rho_s^{r(s)}\,d\pi,
\]
where $\rho_s=d\mu_s/d\pi$. The time restriction gives
\[
p\le r(s)\le m_p,
\qquad
0\le s\le T.
\]

For $s\in[kh,(k+1)h]$, Proposition~21 of
\citet{chewi2024analysislangevinmontecarlo}, together with
\eqref{eq:upper bound for D}, gives
\begin{align*}
\partial_s\Phi(s)
\le{}&
(r(s)-1)
\left(
-\frac{2}{r(s)^2}
+
36L^2(s-kh)^2
+
192hL^2C
\right)I(s)
\\
&+
18(r(s)-1)dL^3(s-kh)^2
+
84(r(s)-1)dL^2(s-kh),
\end{align*}
where
\[
I(s)
=
\frac{
\mathbb E_\pi[
\|\nabla(\rho_s^{r(s)/2})\|^2
]
}{
\mathbb E_\pi[\rho_s^{r(s)}]
}.
\]

The step-size condition implies
\[
36L^2h^2+192hL^2C
\le
\frac{1}{m_p^2}
\le
\frac{1}{r(s)^2}.
\]
Using the LSI lower bound
\[
I(s)
\ge
\frac{1}{2C}
\frac{r(s)}{r(s)-1}
\Phi(s)
\]
and $r(s)-1\le2(p-1)$, we obtain
\[
\partial_s\Phi(s)
\le
-\frac{1}{2Cm_p}\Phi(s)
+
36(p-1)dL^3(s-kh)^2
+
168(p-1)dL^2(s-kh).
\]

Integrating over each interval and iterating over the $N$ steps gives
\[
\Phi(T)
\le
\exp\left(
-\frac{T}{2Cm_p}
\right)
\Phi(0)
+
100(p-1)NdL^2h^2.
\]
Finally,
\[
\Phi(0)=\frac{p-1}{p}\mathcal R_p(\mu_0\|\pi),
\qquad
\mathcal R_{r(T)}(\mu_T\|\pi)
=
\frac{r(T)}{r(T)-1}\Phi(T).
\]
Since
\[
\frac{r(T)}{r(T)-1}\frac{p-1}{p}\le1,
\qquad
\frac{r(T)}{r(T)-1}\le2,
\]
the stated bound follows.
\end{proof}

We next extend the moving-target recursion.

\begin{proposition}[General-order moving-target recursion]
\label{prop:general-order-moving-targets}
Consider LMC with moving targets in
\eqref{ULA with moving targets}. Fix $p\ge2$ and write
\[
m_p=2p-1,
\qquad
T_i=N_i h_i.
\]
Assume that $\pi_{\lambda_i}$ satisfies an LSI with constant
\[
C_i=C_{\mathrm{LSI}}(\pi_{\lambda_i})\ge1
\]
and that $\nabla V_{\lambda_i}$ is $L_i$-Lipschitz with $L_i\ge1$.
Suppose
\[
T_i\le2C_i\log2,
\qquad
h_i
\le
\frac{1}{10000\,C_iL_i^2}
\left(
\frac{3}{m_p}
\right)^2.
\]
Then
\[
\begin{aligned}
\mathcal R_p(\mu_{t_i}\|\pi_{\lambda_{i+1}})
\le{}&
\exp\left(
-\frac{T_i}{4C_i m_p}
\right)
\mathcal R_p(\mu_{t_{i-1}}\|\pi_{\lambda_i})
\\
&+
400(p-1)N_i dL_i^2h_i^2
+
\mathcal R_{8C_i m_p/T_i}
(\pi_{\lambda_i}\|\pi_{\lambda_{i+1}}).
\end{aligned}
\]
\end{proposition}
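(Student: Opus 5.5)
The plan mirrors the proof of Theorem~\ref{thm:Renyi convergence for ULA with moving targets}, with Lemma~\ref{lem:general-order-ula} replacing Lemma~\ref{lem:Renyi convergence for ULA}. Stage $i$ is exactly LMC targeting $\pi_{\lambda_i}$, started from $\mu_{t_{i-1}}$, with $N_i$ steps of size $h_i$. Under the stated hypotheses, Lemma~\ref{lem:general-order-ula} therefore gives
\[
\mathcal R_{r_i}(\mu_{t_i}\|\pi_{\lambda_i})
\le
\exp\Bigl(-\frac{T_i}{2C_im_p}\Bigr)\mathcal R_p(\mu_{t_{i-1}}\|\pi_{\lambda_i})
+200(p-1)N_idL_i^2h_i^2,
\qquad
r_i=1+(p-1)e^{T_i/(2C_i)}.
\]

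Next I apply the weak triangle inequality of Lemma~\ref{lemma:renyi-basic} with $q=p$ and a parameter $\lambda=\lambda_i'\in(0,1)$ to be fixed:
\[
\mathcal R_p(\mu_{t_i}\|\pi_{\lambda_{i+1}})
\le
\frac{p-\lambda_i'}{p-1}\,\mathcal R_{p/\lambda_i'}(\mu_{t_i}\|\pi_{\lambda_i})
+\mathcal R_{\frac{p-\lambda_i'}{1-\lambda_i'}}(\pi_{\lambda_i}\|\pi_{\lambda_{i+1}}).
\]
I will write $\lambda_i'=1-\delta_i$, so that the prefactor is $1+\delta_i/(p-1)$. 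Three requirements then fix $\delta_i$:
\begin{itemize}
\item the order $p/(1-\delta_i)$ must be at most $r_i$, so that monotonicity lets me pass to $\mathcal R_{r_i}$;
\item the product of the prefactor and $\exp(-T_i/(2C_im_p))$ must be at most $\exp(-T_i/(4C_im_p))$;
\item the second order $(p-1+\delta_i)/\delta_i = 1+(p-1)/\delta_i$ must be at most $8C_im_p/T_i$, again so monotonicity applies.
\end{itemize}
The natural choice is $\delta_i=(p-1)T_i/(4C_im_p)$, which makes the prefactor $1+T_i/(4C_im_p)\le e^{T_i/(4C_im_p)}$. Since $T_i/C_i\le 2\log 2$ and $m_p\ge 3$, we also get $\delta_i<1$.

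The main obstacle is checking the first requirement, namely $p/(1-\delta_i)\le 1+(p-1)e^{T_i/(2C_i)}$. Write $x=T_i/(2C_i)\in[0,\log 2]$. Using $e^{x}\ge 1+x$, it suffices to have
\[
\frac{p}{1-\delta_i}\le p+(p-1)x,
\]
which after rearranging becomes
\[
p\,\delta_i\le (p-1)x(1-\delta_i).
\]
With $\delta_i=(p-1)x/(2m_p)$ this reduces to $p/(2m_p)\le 1-\delta_i$. This holds because $p/(2m_p)\le 1/3$ for $p\ge2$, while $\delta_i\le x/4\le(\log 2)/4<2/3$ (using $(p-1)/m_p\le 1/2$).

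The third requirement is exact: $1+(p-1)/\delta_i = 1+4C_im_p/T_i\le 8C_im_p/T_i$, because $T_i\le 4C_im_p$. Monotonicity then bounds the target-change term by $\mathcal R_{8C_im_p/T_i}(\pi_{\lambda_i}\|\pi_{\lambda_{i+1}})$.

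Finally I multiply the Lemma~\ref{lem:general-order-ula} bound by the prefactor $1+\delta_i/(p-1)\le 2$. This produces the contraction factor $\exp(-T_i/(4C_im_p))$ and a discretization term of at most $400(p-1)N_idL_i^2h_i^2$. Adding the target-change term gives the claimed recursion.
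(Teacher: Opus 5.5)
Your proposal is correct and follows the paper's proof essentially step for step. You use the same interpolation parameter $1-(p-1)T_i/(4C_im_p)$ in the weak triangle inequality, then Lemma~\ref{lem:general-order-ula} with monotonicity, and the same estimates $1+T_i/(4C_im_p)\le\min\{2,e^{T_i/(4C_im_p)}\}$ and $1+4C_im_p/T_i\le 8C_im_p/T_i$. The only difference is cosmetic, in verifying $p/(1-\delta_i)\le r_i$: the paper uses $r_i(T_i)\le m_p$, while you use $e^x\ge 1+x$ to reduce to $p/(2m_p)\le 1-\delta_i$, and both checks are valid.
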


\begin{proof}
Define
\[
r_i(s)
=
1+(p-1)e^{s/(2C_i)}
\]
and choose the interpolation parameter in the weak triangle
inequality as
\[
\vartheta_i
=
1-
\frac{(p-1)T_i}{4C_i m_p}.
\]
The time restriction implies $\vartheta_i\in(0,1)$.

The weak triangle inequality gives
\[
\begin{aligned}
\mathcal R_p(\mu_{t_i}\|\pi_{\lambda_{i+1}})
\le{}&
\frac{p-\vartheta_i}{p-1}
\mathcal R_{p/\vartheta_i}
(\mu_{t_i}\|\pi_{\lambda_i})
\\
&+
\mathcal R_{(p-\vartheta_i)/(1-\vartheta_i)}
(\pi_{\lambda_i}\|\pi_{\lambda_{i+1}}).
\end{aligned}
\]

We first compare the order in the first term with $r_i(T_i)$. Let
\[
x_i=\frac{T_i}{2C_i}.
\]
Since $r_i(T_i)\le m_p$ and $e^{x_i}-1\ge x_i$,
\[
r_i(T_i)-p
=
(p-1)(e^{x_i}-1)
\ge
(1-\vartheta_i)r_i(T_i).
\]
Hence
\[
\frac{p}{\vartheta_i}
\le
r_i(T_i).
\]
By monotonicity and
Lemma~\ref{lem:general-order-ula},
\[
\begin{aligned}
\mathcal R_{p/\vartheta_i}
(\mu_{t_i}\|\pi_{\lambda_i})
\le{}&
\exp\left(
-\frac{T_i}{2C_i m_p}
\right)
\mathcal R_p(\mu_{t_{i-1}}\|\pi_{\lambda_i})
\\
&+
200(p-1)N_i dL_i^2h_i^2.
\end{aligned}
\]

Moreover,
\[
\frac{p-\vartheta_i}{p-1}
=
1+\frac{T_i}{4C_i m_p},
\]
and
\[
\frac{p-\vartheta_i}{1-\vartheta_i}
=
1+\frac{4C_i m_p}{T_i}
\le
\frac{8C_i m_p}{T_i}.
\]
Writing
\[
u_i=\frac{T_i}{4C_i m_p},
\]
we also have
\[
(1+u_i)e^{-2u_i}\le e^{-u_i},
\qquad
1+u_i\le2.
\]
Substitution into the weak triangle inequality proves the result.
\end{proof}

\subsection{Proof of \autoref{thm:Renyi divergence calculation}}\label{appendix:Renyi divergence calculation}

\begin{proof}[Proof of \autoref{thm:Renyi divergence calculation}]
The R\'enyi divergence can be expressed as
\begin{align*}
\mathcal{R}_\alpha(\pi_\lambda\|\pi_\theta)
&=\frac{1}{\alpha-1}\log\int \left(\frac{d\pi_\lambda}{d\pi_\theta}\right)^\alpha d\pi_\theta\\
&=\frac{1}{\alpha-1}(-\alpha\log \frac{Z_\lambda}{Z_\theta} +\log\int e^{-\alpha\delta V} d\pi_\theta)\\
&= \frac{1}{\alpha-1}\Bigl[-\alpha\log\mathbb{E}_{\pi_\theta}[e^{-\delta V}] + \log\mathbb{E}_{\pi_\theta}[e^{-\alpha\delta V}]\Bigr].
\end{align*}
Define $\phi(z)=\log\mathbb{E}_{\pi_\theta}[e^{-z\delta V}]$. Then
\begin{align}
\mathcal{R}_\alpha(\pi_\lambda\|\pi_\theta) = \frac{1}{\alpha-1}\bigl[\phi(\alpha)-\alpha\phi(1)\bigr]. \label{eq:Renyi 1}
\end{align}
Expand $\phi$ around $0$ using Taylor's theorem:
\[
\phi(\alpha)=\phi(0)+\alpha\phi'(0)+\frac{\alpha^2}{2}\phi''(\xi_1),\quad \xi_1\in(0,\alpha),
\]
\[
\phi(1)=\phi(0)+\phi'(0)+\frac12\phi''(\xi_2),\quad \xi_2\in(0,1).
\]
Substituting into \eqref{eq:Renyi 1} gives
\begin{align}
\mathcal{R}_\alpha(\pi_\lambda\|\pi_\theta) = \frac{\alpha}{2(\alpha-1)}\Bigl[\alpha\phi''(\xi_1)-\phi''(\xi_2)\Bigr]. \label{eq:Renyi 2}
\end{align}
For any $z\in[0,\alpha]$, we have
\[
\phi''(z)=\operatorname{Var}_{\nu_z}(\delta V)\le \mathbb{E}_{\nu_z}[(\delta V)^2],\qquad 
d\nu_z = \frac{e^{-z\delta V}}{\mathbb{E}_{\pi_\theta}[e^{-z\delta V}]}d\pi_\theta.
\]
Using $|\delta V|\le\eta\phi$,
\[
\phi''(z) \le \eta^2\,\frac{\mathbb{E}_{\pi_\theta}[\phi^2 e^{-z\delta V}]}{\mathbb{E}_{\pi_\theta}[e^{-z\delta V}]}.
\]
For the numerator, $e^{-z\delta V}\le e^{z|\delta V|}\le e^{z\eta\phi}$. 
For the denominator, by Jensen's inequality,
\[
\mathbb{E}_{\pi_\theta}[e^{-z\delta V}]\ge e^{-z\,\mathbb{E}_{\pi_\theta}[\delta V]} \ge e^{-z\eta\,\mathbb{E}_{\pi_\theta}[\phi]},
\]
where we used $\delta V\le \eta\phi$ and $\mathbb{E}_{\pi_\theta}[\delta V]\le \eta\,\mathbb{E}_{\pi_\theta}[\phi]$.
Thus
\[
\phi''(z) \le \eta^2\, e^{z\eta\,\mathbb{E}_{\pi_\theta}[\phi]}\,\mathbb{E}_{\pi_\theta}[\phi^2 e^{z\eta\phi}].
\]
By the step-size condition $z\eta\le\alpha\eta\le\tau$, we obtain
\begin{align}
\phi''(z) \le \eta^2\, e^{\tau\,\mathbb{E}_{\pi_\theta}[\phi]}\,\mathbb{E}_{\pi_\theta}[\phi^2 e^{\tau\phi}] =: C_{\text{var}}\,\eta^2. \label{eq:Renyi 3}
\end{align}
Insert \eqref{eq:Renyi 3} into \eqref{eq:Renyi 2}:
\[
\mathcal{R}_\alpha(\pi_\lambda\|\pi_\theta) \le \frac{\alpha}{2(\alpha-1)}\bigl(\alpha C_{\text{var}}\eta^2 + C_{\text{var}}\eta^2\bigr)
= \frac{\alpha(\alpha+1)}{2(\alpha-1)}\,C_{\text{var}}\,\eta^2.
\]
For $\alpha\ge2$, note that $\frac{\alpha(\alpha+1)}{2(\alpha-1)}\le 2\alpha$, which yields the simplified bound.
\end{proof}

\subsection{Proof of \autoref{thm:Convergence of ULA using successive Moreau Envelopes}}\label{appendix:Convergence of ULA using successive Moreau Envelopes}
\begin{proof}[Proof of \autoref{thm:Convergence of ULA using successive Moreau Envelopes}]

Applying \citet[Theorem~4]{chewi2024analysislangevinmontecarlo}
to the fixed target $\pi_{\lambda_1}$ with R\'enyi order $q=3$,
the stated choices of $h$ and $\hat N$ give
\[
\mathcal R_2(\mu_0\|\pi_{\lambda_1})
\le
\mathcal R_3(\mu_0\|\pi_{\lambda_1})
\le
\frac{\epsilon}{6}.
\]
Here, the logarithmic factor in $\hat N$ follows by requiring the
exponentially decaying initialization term in the error bound of that
theorem to be of order $\epsilon$.

Since $n_0\ge 240000^2C_{\mathrm{LSI}}^4\gamma^{-4}$,
    \[
    h_n \le \frac{1}{10000\,C_{\mathrm{LSI}}(L_{f}+\frac{1}{\lambda_n})^2}.
    \]
By \autoref{thm:Renyi convergence for ULA with moving targets},
\begin{align}
\mathcal{R}_{2}(\mu_{n}\|\pi_{\lambda_{n+1}})
\le e^{-h_n/(12C_{\mathrm{LSI}})}\mathcal{R}_{2}(\mu_{n-1}\|\pi_{\lambda_{n}}) + 400d(L_f+\frac{1}{\lambda_n})^2h_n^2+\mathcal{R}_{\frac{24C_{\mathrm{LSI}}}{h_n}}(\pi_{\lambda_{n}}\|\pi_{\lambda_{n+1}}).\label{eq:recursion}
\end{align}

Now we need to bound the R\'enyi divergence between $\pi_{\lambda_{n}}$ and $\pi_{\lambda_{n+1}}$. By the definition of $\lambda_n$, 
\begin{align*}
    \lambda_n-\lambda_{n+1}&=\gamma((n+1+n_0)^{\frac{1}{4}}-(n+n_0)^{\frac{1}{4}})((n+n_0)^{\frac{1}{4}}-\gamma L_f)^{-1}((n+1+n_0)^{\frac{1}{4}}-\gamma L_f)^{-1}\\
    &\le \gamma\frac{1}{4}(n+n_0)^{-\frac{3}{4}}((n+n_0)^{\frac{1}{4}}-\frac{1}{2}(n+n_0)^{\frac{1}{4}})^{-2}\\
    &=\gamma(n+n_0)^{-\frac{5}{4}}.
\end{align*}
Thus,
\begin{align*}
    \frac{24C_{\mathrm{LSI}}}{h_n}|\lambda_n-\lambda_{n+1}|\le \gamma(n+n_0)^{-\frac{1}{4}}\le\tau.
\end{align*}

Then the assumption on the R\'enyi divergence gives:
\begin{align*}
    \mathcal{R}_{\frac{24C_{\mathrm{LSI}}}{h_n}}(\pi_{\lambda_{n}}\|\pi_{\lambda_{n+1}})\le \frac{24C_{\mathrm{LSI}}}{h_n}C_{\text{R\'enyi}}\,\gamma^2(n+n_0)^{-\frac{5}{2}}=C_{\text{R\'enyi}}\,\gamma^2(n+n_0)^{-\frac{3}{2}}.
\end{align*}
Thus \eqref{eq:recursion} gives
\begin{align*}
\mathcal{R}_{2}(\mu_{n}\|\pi_{\lambda_{n+1}})
&\le e^{-\frac{2}{n+n_0}}\mathcal{R}_{2}(\mu_{n-1}\|\pi_{\lambda_{n}}) + 400d\gamma^{-2}(n+n_0)^{\frac{1}{2}}(\frac{24C_{\mathrm{LSI}}}{n+n_0})^2+C_{\text{R\'enyi}}\,\gamma^2(n+n_0)^{-\frac{3}{2}}\\
&\le (1-\frac{1}{n+n_0})\mathcal{R}_{2}(\mu_{n-1}\|\pi_{\lambda_{n}}) + (250000dC_{\mathrm{LSI}}^2\gamma^{-2}+C_{\text{R\'enyi}}\,\gamma^2)(n+n_0)^{-\frac{3}{2}}.
\end{align*}
Multiplying both sides by $n+n_0$, we obtain
\begin{align*}
(n+n_0)\mathcal{R}_{2}(\mu_{n}\|\pi_{\lambda_{n+1}})
\le (n+n_0-1)\mathcal{R}_{2}(\mu_{n-1}\|\pi_{\lambda_{n}}) + (250000dC_{\mathrm{LSI}}^2\gamma^{-2}+C_{\text{R\'enyi}}\,\gamma^2)(n+n_0)^{-\frac{1}{2}}.
\end{align*}
Iterating the above recurrence, we obtain
\begin{align*}
(N+n_0)\mathcal{R}_{2}(\mu_{N}\|\pi_{\lambda_{N+1}})
\le n_0\mathcal{R}_{2}(\mu_{0}\|\pi_{\lambda_{1}}) + 2(250000dC_{\mathrm{LSI}}^2\gamma^{-2}+C_{\text{R\'enyi}}\,\gamma^2)(N+n_0)^{\frac{1}{2}}.
\end{align*}
Thus
\begin{align*}
\mathcal{R}_{2}(\mu_{N}\|\pi_{\lambda_{N+1}})
&\le \frac{n_0\mathcal{R}_{2}(\mu_{0}\|\pi_{\lambda_{1}})}{N+n_0} + 2(250000dC_{\mathrm{LSI}}^2\gamma^{-2}+C_{\text{R\'enyi}}\,\gamma^2)(N+n_0)^{-\frac{1}{2}}\\
&\le \frac{\epsilon}{6} + 2(250000dC_{\mathrm{LSI}}^2\gamma^{-2}+C_{\text{R\'enyi}}\,\gamma^2)(N+n_0)^{-\frac{1}{2}}\\
&\le \frac{\epsilon}{3}.
\end{align*}
In the first inequality, we use $\mathcal{R}_{2}(\mu_0\|\pi_{\lambda_1})\le \frac{\epsilon}{6}$. The second inequality follows from the choice of $N$.

From the definition of $n_0$, we have $2\lambda_{N+1}\le \tau$. Thus by the assumption on the R\'enyi divergence, 
\begin{align*}
    \mathcal{R}_2(\pi_{\lambda_{N+1}}\|\pi) \le 2\,C_{\text{R\'enyi}}\,(\lambda_{N+1}-0)^2\le 8\,C_{\text{R\'enyi}}\,\gamma^2(N+n_0)^{-\frac{1}{2}}\le \frac{\epsilon}{3}.
\end{align*}

By the weak triangle inequality in Lemma~\ref{lemma:renyi-basic},
\begin{align*}
    \mathcal{R}_{\frac{4}{3}}(\mu_{N}\|\pi)\le 2\mathcal{R}_{2}(\mu_{N}\|\pi_{\lambda_{N+1}})+\mathcal{R}_2(\pi_{\lambda_{N+1}}\|\pi) \le \epsilon.
\end{align*}

Finally, choosing
\[
\gamma^4
=
\frac{
dC_{\mathrm{LSI}}^2
}{
C_{\text{R\'enyi}}
}
\]
makes the two terms in the continuation complexity of the same order.
For this choice,
\[
C_{\mathrm{LSI}}\gamma^{-2}
=
\sqrt{
\frac{
C_{\text{R\'enyi}}
}{
d
}
}.
\]
Substituting these identities into the stated bounds for $\hat N$ and
$N$ gives the final total-complexity claim.

\end{proof}

\end{document}